\documentclass{article}

\usepackage{graphicx}
\usepackage{subcaption}
\usepackage[utf8]{inputenc} 
\usepackage[T1]{fontenc}    
\usepackage{hyperref}       
\usepackage{url}            
\usepackage{booktabs}       
\usepackage{makecell}
\usepackage{amsfonts}       
\usepackage{nicefrac}       
\usepackage{microtype}      
\usepackage[table]{xcolor}         
\usepackage{color}
\usepackage{svg}
\usepackage{booktabs}
\usepackage[dvipsnames]{xcolor}
\usepackage{multirow}
\usepackage{array}
\usepackage{float}
\usepackage{adjustbox}
\usepackage{siunitx}
\usepackage{hyperref}
\PassOptionsToPackage{hyphens}{url}
\usepackage{xurl}

\newcommand\numberthis{\addtocounter{equation}{1}\tag{\theequation}}

\usepackage{hyperref}

\usepackage[accepted]{style/icml2026}

\usepackage{amsmath}
\usepackage{amssymb}
\usepackage{mathtools}
\usepackage{amsthm}

\usepackage[capitalize,noabbrev]{cleveref}

\theoremstyle{plain}
\newtheorem{theorem}{Theorem}[section]

\newtheorem{lemma}[theorem]{Lemma}

\theoremstyle{definition}

\theoremstyle{remark}

\newcommand{\err}[1]{\textcolor{gray}{\scriptstyle\pm #1}}

\usepackage[textsize=tiny]{todonotes}

\icmltitlerunning{Semi-Supervised Learning for Molecular Graphs via Ensemble Consensus}

\begin{document}

\twocolumn[
  \icmltitle{Semi-Supervised Learning for Molecular Graphs via Ensemble Consensus}



  \icmlsetsymbol{equal}{*}

  \begin{icmlauthorlist}
    \icmlauthor{Rasmus Tirsgaard}{yyy}
    \icmlauthor{Laurits Fredsgaard}{equal,yyy}
    \icmlauthor{Marisa Wodrich}{equal,yyy}
    \icmlauthor{Mikkel Jordahn}{yyy}
    \icmlauthor{Mikkel N. Schmidt}{yyy}
  \end{icmlauthorlist}

  \icmlaffiliation{yyy}{Cognitive Systems, Department of Applied Mathematics and Computer Science, Technical University of Denmark}

  \icmlcorrespondingauthor{Rasmus Tirsgaard}{rhti@dtu.dk}

  \icmlkeywords{Machine Learning, ICML}

  \vskip 0.3in
]



\printAffiliationsAndNotice{\icmlEqualContribution}

\begin{abstract}
Machine learning is transforming molecular sciences by accelerating property prediction, simulation, and the discovery of new molecules and materials. Acquiring labeled data in these domains is often costly and time-consuming, whereas large collections of unlabeled molecular data are readily available. Standard semi-supervised learning methods often rely on label-preserving augmentations, which are challenging to design in the molecular domain, where minor changes can drastically alter properties. In this work, we show that semi-supervised methods that rely on an ensemble consensus can boost predictive accuracy across a diverse range of molecular datasets, task types, and graph neural network architectures. We find that training with an ensemble consensus objective increases robustness in models and exhibits an effect similar to knowledge distillation; an individual member of an ensemble trained this way outperforms a full ensemble trained in a traditional supervised fashion in almost all cases. In addition, this type of semi-supervised training reduces calibration error.
\end{abstract}

\section{Introduction}
In recent years, machine learning has emerged as a transformative tool in the molecular sciences, accelerating discovery in areas ranging from predicting quantum mechanical properties~\citep{painn,SchNet,allegro,UMA} to discovering novel drugs~\citep{antibotics_deep_learning,CCR5,BRD4, Keap1-Nrf2, CDK20} and catalysts~\citep{catalyst_ammonia, catalyst_methane, catalyst_Na-S}.
However, despite recent efforts to curate large labeled datasets~\citep{merchant2023scalingdeeplearningformaterialsdiscovery, levine2025openmolecules2025omol25}, the scarcity of labeled data remains a fundamental bottleneck. 

In materials and drug discovery, labels often come from computationally expensive simulations, such as density functional theory (DFT), or resource-intensive laboratory measurements. Consequently, datasets with specialized high-quality labels are typically small, while large databases of unlabeled molecules (e.g., ZINC~\citep{zinc, pubchem}) are not fully exploited. 
The combination of abundant unlabeled data and scarce labeled data presents an ideal setting for semi-supervised learning (SSL).

Yet, many state-of-the-art methods are poorly suited for the molecular domain. Dominant techniques such as consistency training~\citep{mixmatch, fixmatch} critically depend on data augmentation strategies that create perturbed copies of an input while preserving its label. Such augmentations are notoriously difficult to design for molecules, where minor structural changes can drastically alter the chemical properties we aim to predict. Meanwhile, approaches such as iterative pseudo-labeling~\citep{original_pseudo_labels, riloff-wiebe-2003-learning, pmlr-v180-huang22a} hinges on the ability to reliably rank predictions by confidence in order to select the best candidates for pseudo-labeling and to avoid reinforcing model errors. This highlights a critical gap where standard SSL benchmarks and algorithms do not translate well to the practical challenges of molecular science. 

\begin{figure*}
    \centering
    \includegraphics[width=0.49\linewidth]{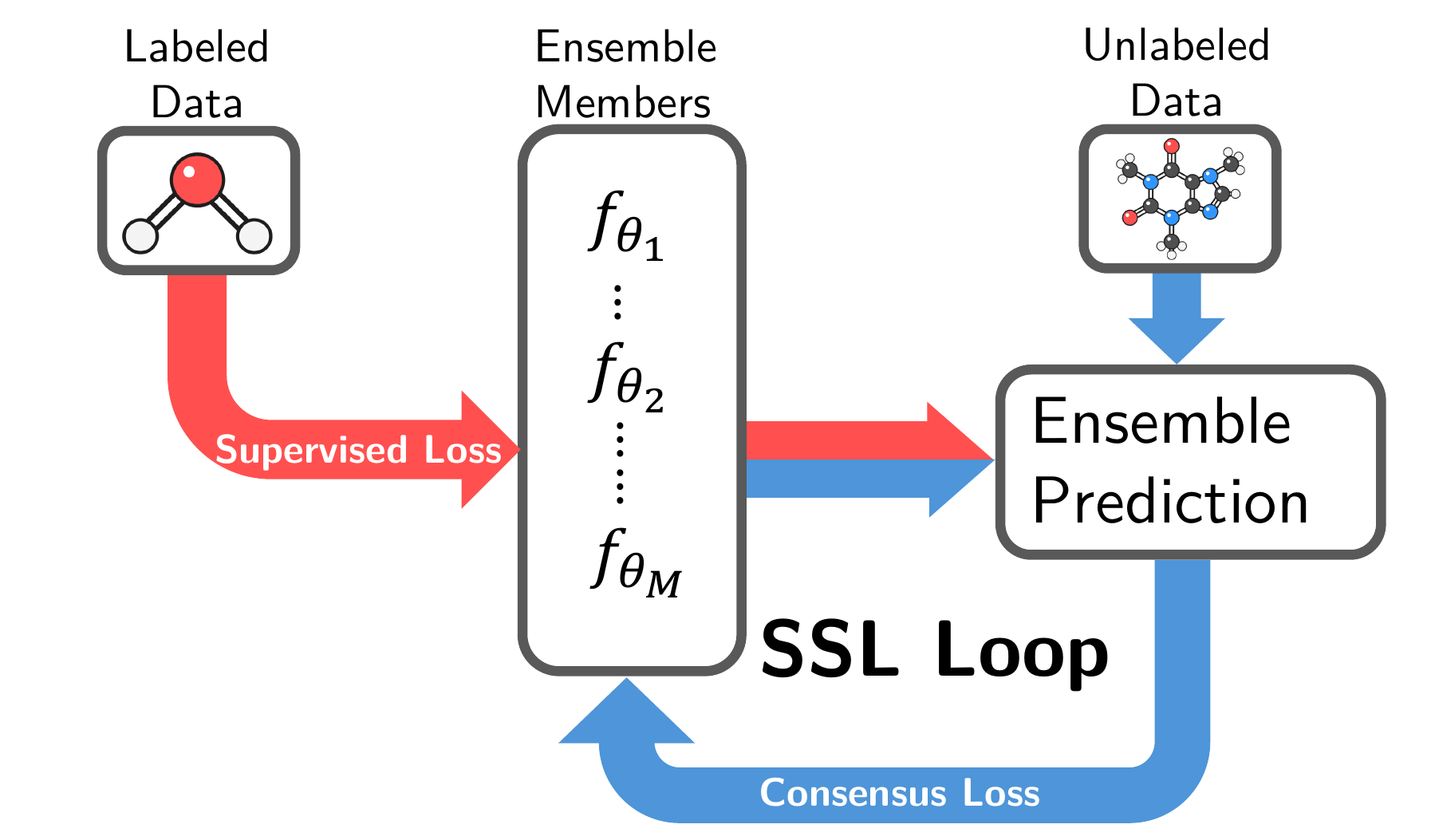}
    \includegraphics[width=0.49\linewidth]{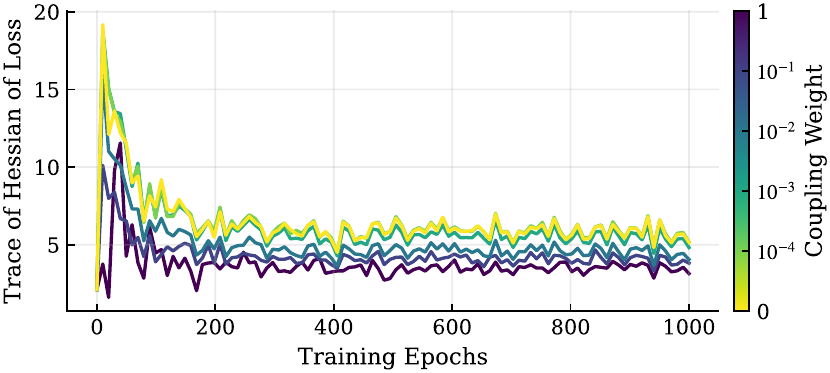}
    \caption{\textbf{(Left)} Schematic of the semi-supervised ensemble training loop. Individual ensemble members $f_\theta$ are trained on labeled data via a standard supervised loss. Simultaneously, the full ensemble's aggregated prediction on unlabeled data acts as a target for a consensus loss. Since the ensemble aggregate typically outperforms individual members, this creates a self-reinforcing loop where the collective prediction guides individual improvement. \textbf{(Right)} Training with ensemble consensus (provably) makes the model converge to more flat points in the loss landscape. This is shown here on the dataset QM9 target ZPVE as measured here by the trace of the Hessian of the loss on validation data.}
    \label{fig:front}
\end{figure*}
In this work, we propose an SSL method that does not require explicit data augmentations, but rather relies on an \emph{ensemble consistency loss}. Specifically, we train a model ensemble where each member learns from labeled data using a standard supervised loss and from unlabeled data using a loss that promotes agreement among the ensemble members. While ensemble coupling in SSL has been explored previously~\citep{PISSL, mean-teacher,agreement_ensemble}, our formulation is theoretically grounded in an ensemble loss ambiguity decomposition, requires only a single training run, yields individual models with ensemble-level performance, and leads to more robust models. Our work makes five core contributions:
\begin{enumerate}
    \item We provide a theoretical motivation for our approach based on the formal decomposition of ensemble error, which justifies the consensus as a high-quality, better-than-average supervisory signal.
    \item We demonstrate that our method robustly improves predictive accuracy across a wide range of molecular datasets and architectures for both regression and classification.
    \item We observe that the performance gap between individual members and the full ensemble is effectively eliminated. A single member of our consensus-trained ensemble even consistently outperforms a full baseline ensemble trained with traditional supervision.
    \item We show that the ensemble-consensus objective also has a regularising effect, pushing the ensemble members towards more robust models.
    \item We demonstrate that our model generally reduces calibration error compared to other SSL methods and maintains high predictive accuracy on the unlabeled training data.
\end{enumerate}

\section{Background}
\label{sec:appendix_background}
\paragraph{Semi-Supervised Learning } Semi-supervised learning (SSL) is a machine learning paradigm designed for settings with a small amount of labeled data and a much larger amount of unlabeled data. The idea is to leverage the unlabeled data to learn about the underlying structure of the data distribution $p(x)$, which in turn improves the model's ability to learn the mapping from inputs to outputs, $p(y|x)$. Effective SSL methods are typically built upon one or more of the following assumptions: \
\begin{itemize}
    \item \textbf{Smoothness Assumption:} If two points $x_1, x_2$ are close in a high-density region of the underlying data manifold, their corresponding labels $y_1, y_2$ should also be close or identical.
    \item \textbf{Cluster Assumption:} The data tends to form distinct clusters, and points within the same cluster are likely to share the same label. This implies that a good decision boundary should lie in the low-density region between clusters.
\end{itemize}

\paragraph{Consistency Loss}
Consistency regularization is currently the most dominant family of SSL methods. 
The core idea is that the model's prediction for an unlabeled data point should remain consistent under small perturbations. This directly enforces the smoothness assumption. A successful perturbation or data augmentation is one that explores the local neighborhood of a data point on the manifold without changing its label.
The objective is typically formulated as minimizing a distance measure (e.g., Mean Squared Error or KL-Divergence) between the model's predictions for two different augmentations of the same input:
\begin{equation*}
\mathcal{L}_\text{consistency}=\mathbb E_{x_u\sim X_u} [D(f_\theta(\operatorname{aug_1}(x_u))||f_\theta(\operatorname{aug_2}(x_u))].
\end{equation*}

Different choices of the perturbations give rise to a wide range of methods. $\Pi$-models~\citep{PISSL} enforce that two predictions should be the same under transformations to the data, with dropout, and random pooling providing perturbations to the model. Each unlabeled datapoint is passed through the network twice and penalized for the difference in the predictions between the passes. The benefit of consistency loss is highly linked to the quality of the data augmentation techniques, as shown in~\citep{UMA-ssl}. Temporal ensembling~\citep{temporal-ensemble} builds upon this by maintaining an exponential moving average of predictions for each unlabeled example to create a more stable consistency target. Instead of applying a temporal averaging over the predictions, the mean-teacher method~\citep{mean-teacher} averages the model weights and uses the predictions of that model as the consistency target. In the above works, the predictions can be seen as coming from a sort of pseudo-ensemble. As the members of this pseudo-ensemble are based on the trajectory or perturbation of a single network, the diversity of the predictions is reduced and biased, which reduces the prediction accuracy as we later highlight.

This problem can be mitigated by introducing multiple different initial weightings of the same architecture and training them in parallel to use as consistency targets.~\cite{CPS} (cross pseudo supervision) proposes to do this for pixel-wise segmentation, where the prediction of each of the two ensemble members is hard labeled and used as the consistency target. \cite{n-CPS} further extends this for pixel-wise segmentation by using $n$ ensemble models and taking all combinations of hard labeled predictions as the consistency targets. Another paper that explores different ensemble predictions is \cite{agreement_ensemble}. Here the ensemble members are restarted multiple times during training, and the consensus target is computed from a trainable majority vote or Restricted Boltzmann Machine. All the above methods can be seen as stemming from a broad class of SSL methods that rely on the prediction of an ensemble to guide the training of the individual models to improve predictive accuracy. 

In many applications, there exist few or no data augmentations that preserve the label of a data point. Examples include molecules, where the chemical properties can be changed significantly under small changes to the molecule. This restricts the consistency loss methods to only rely on perturbations to the model and not the data. This makes the class of ensemble-based SSL methods well-suited for the problem.

\paragraph{Pseudo-labeling}
Pseudo-labeling~\citep{yarowsky-1995-unsupervised, original_pseudo_labels, riloff-wiebe-2003-learning}, also known as self-training or entropy minimization, is a process where an initial model is trained on the labeled data points and then used to predict labels for a large unlabeled dataset. The primary risk of this method is confirmation bias: if the model generates an incorrect pseudo-label with high confidence, it will reinforce its own mistake during retraining, leading to error propagation.
To mitigate this risk, modern SSL methods often integrate more sophisticated frameworks. For example, one uncertainty-aware approach uses a model's evidential uncertainty to estimate the quality of each pseudo-label. This enables an adaptive weighting scheme where high-uncertainty (low-quality) pseudo-labels are given a smaller weight in the loss function, reducing their biasing effect. While this can be effective, such a strategy requires an initial, full training phase on the labeled data before the episodic pseudo-labeling can begin. It also introduces several additional tunable hyperparameters related to its episodic schedule, which require careful tuning~\citep{pmlr-v180-huang22a}.

\paragraph{Knowledge Distillation}
Knowledge distillation~\citep{bucilua2006model, hinton2015distilling} was proposed as a way of using a complex "teacher" model to transfer its knowledge to a simpler "student" model. Usually, the teacher model is either a model with more parameters or the same model with multiple predictions averaged over multiple augmentations of the input, but the use of an ensemble as the teacher has also been explored~\citep{hinton2015distilling, fukuda2017efficient, malinin2019ensemble}. 
The transfer of knowledge can be enforced at different levels, such as feature representations~\citep{Heo_Lee_Yun_Choi_2019} or intermediate layers~\citep{zagoruyko2017payingattentionattentionimproving}. Approaches that match predictions are most closely related to our work. Aligning student and teacher predictions resembles the use of consistency targets in semi-supervised learning, with the key distinction that distillation is typically applied post-hoc, and thus lacks a bootstrapping effect where the teacher also benefits from the student’s progress. Furthermore, knowledge distillation is often focused on preserving the uncertainty calibration of the teacher or achieving computational efficiency by deploying the smaller student model instead of the larger one.

\section{Theoretical Motivation}
\label{sec:appendix_ensemble_error}

The theoretical motivation for our method is grounded in the formal relationship between an ensemble's performance and that of its individual members. Ensemble performance is governed by a fundamental trade-off between the accuracy of the individual models and the diversity of their predictions. This relationship can be expressed through a loss decomposition, which shows that for any convex loss function, the ensemble's loss is guaranteed to be less than or equal to the average of the individual losses~\citep{ensemble_theory}. This stems from Jensen's inequality and takes the general form:
\begin{equation}
    \text{Ensemble Loss} = \text{Average Individual Loss} - \text{Ambiguity}
\end{equation}
The ambiguity (or diversity) term is a non-negative quantity measuring disagreement among the members. This decomposition reveals that optimal ensemble performance requires not only accurate individual models but also beneficial diversity.

\paragraph{Mean Squared Error}
This principle is most clearly illustrated in regression with Mean Squared Error (MSE), where the decomposition is exact and well-established~\citep{ensemble_diversity}. 
For an ensemble of $M$ models $\{f_{\theta_m}\}_{m=1}^M$ with a mean prediction $\bar{f}(x)$, the decomposition is:

\begin{align}
    \label{eq:mse_decomp_appendix}
&\overbrace{(y - \bar{f}(x))^2}^{\text{Ensemble MSE}} \\
&= \underbrace{\frac{1}{M} \sum_{m=1}^{M} (y - f_m(x))^2}_{\text{Average Individual MSE}} - \underbrace{\frac{1}{M} \sum_{m=1}^{M} (\bar{f}(x) - f_m(x))^2}_{\text{Ambiguity (Prediction Variance)}}. \nonumber
\end{align}
Here, the ambiguity is simply the variance of the predictions around the ensemble mean, providing a clear, label-independent measure of diversity. 

\paragraph{Cross-Entropy}
The same principle extends to classification, though the decomposition for Cross-Entropy (CE) loss is more nuanced. Using the geometric mean to average probabilities across the ensemble yields a clean, label-independent decomposition, as in regression~\citep{ensemble_theory}. An exact decomposition is also available for the arithmetic mean:
\begin{align}
\label{eq:ce_decomp_appendix}
&\overbrace{-\mathbf{y} \cdot \ln \bar{\mathbf{f}}}^\text{Ensemble CE Loss} \\
&= \underbrace{-\frac{1}{M}\sum_{m=1}^{M} \mathbf{y} \cdot \ln \mathbf{f}_m}_\text{Avg. Individual CE Loss} - \underbrace{\sum_{c=1}^{C} y_c \ln \frac{\frac{1}{M}\sum_{m=1}^{M} f_{m,c}}{(\prod_{m=1}^{M} f_{m,c}^{1/M}}}_\text{Ambiguity (Label-Dependent)} ,\nonumber
\end{align}
although here the ambiguity term is explicitly a function of the true label vector $\mathbf{y}$ (where $y_c$ is the true probability of class $c$), making it label-dependent~\citep{ensemble_theory}. Crucially, this ambiguity term is still guaranteed to be non-negative, ensuring that the ensemble loss is always less than or equal to the average individual loss.
 
Because the ensemble consensus is provably superior to the average individual model, using it as a consistency target for unlabeled data is both effective and theoretically well-justified. In addition, the ensemble prediction will be a useful signal as long as the models are better than random. This suggests the ensemble prediction does not need to incorporate a warm-startup to provide a useful predictive signal, as other works have observed \citep{mean-teacher} and used \citep{n-CPS, agreement_ensemble}. 

\section{Method}
\subsection{Formal Description}
We address a standard semi-supervised learning problem with a small set of labeled data, $\mathcal{D}_L = \{(x_i, y_i)\}_{i=1}^{N_L}$, and a large set of unlabeled data, $\mathcal{D}_U = \{u_j\}_{j=1}^{N_U}$. We assume that both datasets are drawn from the same underlying distribution. Our method utilizes a deep ensemble of $M$ models, $\bar{f}=\{f_{\theta_m}\}_{m=1}^M$, initialized with different random weights.
The training objective is defined on each model $f_{\theta_m}$ within the ensemble. At each training step, its parameters $\theta_m$ are updated to minimize a composite loss, $\mathcal{L}_m$. The loss combines a standard supervised signal $\mathcal{L}_{\text{sup}}$ with an ensemble-driven consistency signal $\mathcal{L}_{\text{consistency}}$ that couple the ensemble members together: 
\begin{equation}
\label{eq:single_model_loss}
\mathcal{L}_m = \mathcal{L}_{\text{sup}}(f_{\theta_m}, B_L) + \gamma \mathcal{L}_{\text{consistency}}(f_{\theta_m}, \bar{f}, B_U),
\end{equation}
where $B_L$ and $B_U$ are mini-batches of labeled and unlabeled data, respectively, and $\gamma$ is the coupling weight. During training, all models are updated simultaneously by minimizing the sum of their individual losses i.e. $\mathcal L = \sum_{m=1}^M \mathcal{L}_m$.  The first term, $\mathcal{L}_{\text{sup}}$, is the standard task-specific loss for model $f_{\theta_m}$ on the labeled batch, such as mean squared error (MSE) for regression or cross-entropy (CE) for classification. The second term, $\mathcal{L}_{\text{consistency}}$, provides the semi-supervised signal. It is calculated for the model $f_{\theta_m}$ but depends on the outputs of the entire ensemble. For each unlabeled sample $u \in B_U$, a consensus prediction, $\bar{f}(u)$, is computed by averaging the predictions of all $M$ models:
\begin{equation}
\bar{f}(u) = \frac{1}{M} \sum_{m=1}^{M} f_{\theta_m}(u).
\end{equation}

The consensus prediction serves as the augmentation-free consistency target for model $f_{\theta_m}$. We penalize the discrepancy between model prediction and the ensemble consensus as
\begin{equation}
\label{eq:consistency_loss}
\mathcal{L}_{\text{consistency}}(f_{\theta_m}, \bar{f}, B_U) = \frac{1}{|B_U|} \sum_{u \in B_U} D\left(f_{\theta_m}(u), \bar{f}(u)\right).
\end{equation}
Here, $D$ is a suitable distance metric, for example, the task-specific supervised loss (e.g., L2 or KL-divergence). In practice, when minimizing the loss we detach the gradient through $\bar{f}(u)$, as the consensus prediction is observed to at least as accurate as the individual members' predictions on average (see Section~\ref{sec:appendix_ensemble_error}), ensuring that the ensemble is not encouraged to match the less accurate individual predictions. Note, not detaching the gradient has been observed to result in failure cases such as \textit{learner collusion}~\citep{voter_collusion}, but in our experience it does not appear to affect results negatively, see Appendix \ref{appendix:detached}.

\subsection{Consensus–Diversity Dynamics}
Our proposed SSL training scheme directly manipulates the trade-off between accurate individual models and high diversity among them. The unsupervised loss term, $\mathcal{L}_{\text{consistency}}(x_u) = \mathcal{L}(f_{\theta_i}(x_u), \bar{f}(x_u))$, creates a pull towards consensus by guiding each model $f_{\theta_i}$ to agree with the more stable ensemble prediction $\bar{f}$. This directly reduces the average individual error by providing a high-quality supervisory signal for unlabeled data.

Simultaneously, this pull is counteracted by forces that preserve diversity. Each model begins from a unique random initialization and follows a distinct optimization path due to the stochastic nature of mini-batch SGD. This dynamic allows the models to converge to different solutions in parameter space while still agreeing in function space.

Therefore, our method does not eliminate diversity but rather regulates it. The hyperparameter $\gamma$ in the total loss $\mathcal{L} = \mathcal{L}_l + \gamma \mathcal{L}_{\text{consistency}}$ serves as a direct control over this balance, allowing us to leverage the unlabeled data to improve individual model accuracy without forcing a complete collapse in diversity. See appendix \ref{appendix:ensemble_variance} for details.

Another benefit of this continuous learning between models is that the models should be less likely to get stuck on early bad predictions, as can be the case with many forms of pseudo-labeling. This is because the ensemble targets are not frozen but "moves" with the ensemble. This can explain why we observe no benefit to warmup of the coupling loss.

\subsection{Relationship to Bayesian Posteriors}
If we treat one model trained from a classic deep learning as a posterior sample, the ensemble prediction converges to the posterior mean as the number of ensemble members increases towards infinity. This principle guides ensemble distillation to let an individual model approximate the posterior mean. In appendix \ref{appendix:theoretical_convergence}, we show that consensus training also converges to the posterior mean, but only for an ensemble with linear models. Instead, for non-linear models there is an additional term that biases members in the ensemble towards the least parameter-sensitive predictions in the ensemble. This is generally a benevolent effect~\citep{DBLP:journals/neco/HochreiterS97a, kaddour2022flat, DBLP:conf/iclr/KeskarMNST17, DBLP:conf/icml/AndriushchenkoC23}, and we can understand it as the members will learn to prioritize the simplest explanation all members can agree on. We hypothesize this is the main reason individual models in the ensemble consensus can outperform a full standard ensemble. 

\section{Experimental setup}
We evaluate our method in two settings: First, on a quantum chemistry benchmark to demonstrate its relevance for 3D-geometry-based molecular property prediction, and then across a diverse suite of graph-level tasks to assess its broader applicability. All ensemble members were trained on identical mini-batches of supervised data to simplify implementation. While this strategy reduces ensemble diversity, potentially limiting the ensemble's predictive power, it allows for a fair direct comparison with single models. The code is available here \url{https://github.com/lauritsf/semi-supervised-ensemble-training}.

\paragraph{Semi-supervised Protocol}
To simulate the common scenario of data scarcity, we restrict the supervised portion of our training to a small fraction for each task (10\%). The remaining training data (90\%) is treated as unlabeled and is used exclusively for our ensemble consistency loss. Our primary baseline is a standard deep ensemble of the same architecture, trained only on this small labeled data subset. This setup allows us to directly measure the performance gain from leveraging unlabeled data.

\paragraph{Datasets}
We test our method on a wide range of different datasets. We perform a prediction of molecular properties in the QM9 dataset~\citep{moleculenet} for the main 12 targets, using the PaiNN architecture~\cite{painn} and an ensemble of size $M=4$. To investigate how our method scales, we study and compare performance on a single target (internal energy at 0K) for different ensemble sizes ($M \in \{1,2,3,4\}$). We also study the performance on harder data splits in Appendix \ref{appendix:scaffolding}.
For broader validation of our method, we adopt a comprehensive benchmark suite of graph-level tasks. We use three different graph-based architectures: GCN~\citep{GCN}, GIN~\citep{DBLP:conf/iclr/XuHLJ19}, and GatedGCN~\citep{GatedGCN}, adapting the code from~\cite{gnn_plus} and following the testing procedure from~\cite{graphGPS}.  We refer to this suite of benchmarks as GNN+ benchmarks. The ensemble size is fixed to $M=4$.
To demonstrate the general applicability of our method beyond the molecular domain, we perform experiments on a benchmark of non-molecular graph datasets (see Appendix~\ref{gnn_non_chemical_datasets}). Further experiments showing broader applicability beyond graphs are included in Appendix~\ref{appendix:scaling_ensemble_size}. All datasets were split into 80\% training data, 10\% validation data and 10\% test data. The training data was further split into 10\% labeled training data and the labels for the remaining 90\% where discarded to used as unlabeled (unsupervised) training data.

\begin{table*}[h!]
\centering
\caption{PaiNN performance (MAE) on QM9 targets. Results are reported as mean $\pm1.96$ standard error of the mean over 5 seeds. 
Background colors indicate performance relative to the supervised ensemble baseline: \colorbox{green!20}{green} denotes lower error (improvement), while \colorbox{red!20}{red} denotes higher error.}
\label{tab:qm9_single_seed}
\resizebox{\textwidth}{!}{
\begin{tabular}{l l r r r r r r}
\toprule
& & \multicolumn{2}{c}{Individual Member} & \multicolumn{2}{c}{Ensemble (M=4)} & \\
\cmidrule(lr){3-4} \cmidrule(lr){5-6} 
Target & Unit & \multicolumn{1}{c}{Supervised} & \multicolumn{1}{c}{Supervised + SSL} & \multicolumn{1}{c}{Supervised} & \multicolumn{1}{c}{Supervised + SSL} & \multicolumn{1}{c}{Mean-teacher} & \multicolumn{1}{c}{PSEUD$\sigma$}\\
\midrule
  $\mu$ & D &  \cellcolor{red!34} $.0737 \err{.0008}$ &  \cellcolor{green!36} $.0619 \err{.0003}$ & $.0680 \err{.0006}$ &  \cellcolor{green!40} $.0613 \err{.0003}$ &  \cellcolor{red!24} $.0721 \err{.0024}$ &  \cellcolor{green!18} $.06487 \err{.00088}$ \\
  $\alpha$ & $a_0^3$ &  \cellcolor{red!40} $.1622 \err{.0011}$ &  \cellcolor{green!19} $.1322 \err{.0011}$ & $.1419 \err{.0009}$ &  \cellcolor{green!22} $.1303 \err{.0011}$ &  \cellcolor{red!29} $.1569 \err{.0020}$ &  \cellcolor{red!6} $.1454 \err{.0004}$ \\
  $\epsilon_{\text{HOMO}}$ & meV &  \cellcolor{red!40} $80.6061 \err{.5062}$ &  \cellcolor{green!24} $73.9789 \err{.4368}$ & $76.4713 \err{.5361}$ &  \cellcolor{green!32} $73.0755 \err{.4472}$ &  \cellcolor{red!39} $80.6090 \err{2.0301}$ &  \cellcolor{red!21} $78.72 \err{1.1196}$ \\
  $\epsilon_{\text{LUMO}}$ & meV &  \cellcolor{red!40} $62.0372 \err{.4253}$ &  \cellcolor{green!23} $57.7186 \err{.2247}$ & $59.3140 \err{.4609}$ &  \cellcolor{green!30} $57.2369 \err{.2159}$ &  \cellcolor{red!39} $61.9719 \err{.8156}$ &  \cellcolor{red!6} $59.74 \err{0.6248}$ \\
  $\Delta\epsilon$ & meV &  \cellcolor{red!40} $125.2102 \err{.4734}$ &  \cellcolor{green!16} $117.0365 \err{.4988}$ & $119.4206 \err{.4468}$ &  \cellcolor{green!25} $115.7195 \err{.5100}$ &  \cellcolor{red!38} $125.4843 \err{2.5836}$ &  \cellcolor{red!21} $122.5 \err{.7501}$ \\
  $\langle R^2 \rangle$ & $a_0^2$ &  \cellcolor{red!23} $.7922 \err{.0284}$ &  \cellcolor{green!2} $.6100 \err{.0206}$ & $.6246 \err{.0205}$ &  \cellcolor{green!8} $.5605 \err{.0206}$ &  \cellcolor{red!24} $.7987 \err{.04019}$ &  \cellcolor{red!40} $.9099 \err{.0157}$ \\
  ZPVE & meV &  \cellcolor{red!40} $2.2197 \err{.0055}$ &  \cellcolor{green!16} $2.0138 \err{.0054}$ & $2.0743 \err{.0054}$ &  \cellcolor{green!22} $1.9907 \err{.0055}$ &  \cellcolor{red!29} $2.1821 \err{.0380}$ &  \cellcolor{red!18} $2.141 \err{.0123}$ \\
  $U_0$ & meV &  \cellcolor{red!40} $24.8752 \err{.1477}$ &  \cellcolor{green!9} $19.9642 \err{.1291}$ & $20.9101 \err{.1557}$ &  \cellcolor{green!15} $19.3816 \err{.1278}$ &  \cellcolor{red!38} $24.7288 \err{.8505}$ &  \cellcolor{red!31} $24.00 \err{.2929}$ \\
  $U$ & meV &  \cellcolor{red!40} $25.1281 \err{.2025}$ &  \cellcolor{green!9} $20.1731 \err{.1577}$ & $21.0971 \err{.1914}$ &  \cellcolor{green!14} $19.5886 \err{.1574}$ &  \cellcolor{red!38} $24.9622 \err{.9955}$ &  \cellcolor{red!31} $24.28 \err{.4701}$ \\
  $H$ & meV &  \cellcolor{red!40} $25.1240 \err{.1981}$ &  \cellcolor{green!9} $20.1407 \err{.1268}$ & $21.0892 \err{.1946}$ &  \cellcolor{green!15} $19.5509 \err{.1328}$ &  \cellcolor{red!36} $24.8495 \err{.9604}$ &  \cellcolor{red!32} $24.33 \err{.6181}$ \\
  $G$ & meV &  \cellcolor{red!40} $25.3804 \err{.1856}$ &  \cellcolor{green!11} $20.3142 \err{.1571}$ & $21.4136 \err{.1811}$ &  \cellcolor{green!16} $19.7479 \err{.1634}$ &  \cellcolor{red!38} $25.1641 \err{.7511}$ &  \cellcolor{red!29} $24.33 \err{.4412}$ \\
  $C_v$ & $\frac{\text{cal}}{\text{mol K}}$ &  \cellcolor{red!40} $.0567 \err{.0004}$ &  \cellcolor{green!20} $.0449 \err{.0002}$ & $.0489 \err{.0004}$ &  \cellcolor{green!25} $.0439 \err{.0002}$ &  \cellcolor{red!34} $.0556 \err{.0002}$ &  \cellcolor{red!26} $.05408 \err{.00027}$ \\
\bottomrule
\end{tabular}
}
\end{table*}

\paragraph{Hyperparameter Tuning}
To ensure well-tuned models for datasets, the training hyperparameters (learning rate and weight decay) were optimized for each target and model based on the validation performance of a single model in the supervised setting on the reduced labeled data. These hyperparameters were kept fixed across different SSL methods tested to ensure fair comparison. The parameters associated with each specific SSL method (coupling weight, mean-teacher decay, etc.) were optimized based on validation accuracy for each target on QM9, and selected for the GNN+ datasets based on the best value of ZINC. Details about the tuning procedures and selected hyperparameters can be found in Appendix~\ref{Appendix:hyperparameters}.

\paragraph{Evaluation}
We evaluate the predictive performance for a single model, a standard ensemble, an ensemble using SSL via ensemble consensus (ours) and an individual member from the latter. In addition, we implement both mean-teacher~\citep{mean-teacher}, and a strong pseudo-label based approach that have been used on QM9, PSEUD$\sigma$~\citep{pmlr-v180-huang22a} . All results are reported as the mean along with 1.96 times the standard error of the mean across different seeds.

\section{Results}
\subsection{Molecular Property Prediction on QM9} \label{sec:qm9_mol}
The performance of our method on the 12 regression targets of the QM9 dataset is presented in Table~\ref{tab:qm9_single_seed}. The results indicate that training with the ensemble consistency loss (``Supervised + SSL'') reduces the MAE across all evaluated targets when compared to the supervised-only baseline. This is observed for both the individual PaiNN models and the four-member ensembles. Furthermore, a single individual model from the coupled ensemble consistently outperforms the traditional supervised ensemble on all targets. 

The results for molecular property prediction on QM9 for different ensemble sizes are shown in Table~\ref{tab:qm9_results_different_ensemble_sizes}. Our SSL method outperforms a traditional ensemble for all sizes tested. Additionally, using an individual member from an ensemble trained using our proposed SSL method, we not only outperform a standard single model but also perform at a similar level to an ensemble that has only been trained on the supervised data. The results are consistent across all ensemble sizes. Performance increases with more ensemble members.

\begin{table}[h!]
\small
\centering
\caption{PaiNN performance (MAE) on QM9 internal energy at 0K in eV ($U_0$) for different ensemble sizes averaged across 5 seeds, with mean $\pm1.96$ standard error of the mean.}
\label{tab:qm9_results_different_ensemble_sizes}
\begin{tabular}{c c c c c}
\toprule
& \multicolumn{2}{c}{Individual member} & \multicolumn{2}{c}{Ensemble} \\
\cmidrule(r){2-3} \cmidrule(l){4-5} 
\makecell{Size\\(M)} & \multicolumn{1}{c}{Supervised} & \makecell{Supervised \\ + SSL} & \multicolumn{1}{c}{Supervised} & \makecell{Supervised \\ + SSL} \\
\midrule
1 & $24.88 \err{0.15}$& -- & -- & -- \\
2 & -- & $20.89 \err{0.30}$& $22.18 \err{0.44}$& $20.43 \err{0.29}$\\
3 & -- & $20.44 \err{0.16}$& $21.31\err{0.23}$& $19.93 \err{0.17}$\\
4 & -- & $19.96 \err{0.13}$& $20.91 \err{0.16}$& $19.38 \err{0.13}$\\
\bottomrule
\end{tabular}
\end{table}

\subsection{GNN+ Benchmark}
To assess the broader applicability of our method, we evaluate it on several molecule-related benchmarks using three different GNN architectures. The results are summarized in Table~\ref{tab:gnn_molecular_dataset}, and are consistent with the performance on QM9, even when the coupling weight is kept the same across datasets. Looking at a single model, the addition of the SSL task consistently improves performance over the supervised-only baseline across all datasets and architectures. This performance gain also translates to the full ensembles, which show improvement when trained with the consistency loss. The performance of a single model trained with our SSL method often exceeds that of an entire ensemble trained only on labeled data.

\begin{table*}[tbp]
\centering
\setlength{\tabcolsep}{4pt} 
\caption{Performance on molecule-related benchmarks using different GNN architectures averaged across 5 seeds using the optimal coupling weight found on ZINC with GCN. Pairwise loss is defined in Appendix \ref{appendix:pairwise}.
Background colors represent the deviation from the supervised ensemble baseline for each architecture: \colorbox{green!20}{green} denotes lower error (improvement), while \colorbox{red!20}{red} denotes higher error.}
\label{tab:gnn_molecular_dataset}
\resizebox{\textwidth}{!}{%
\begin{tabular}{l l l cccccc}
\toprule
& & & \multicolumn{2}{c}{\normalsize GCN} & \multicolumn{2}{c}{\normalsize GIN} & \multicolumn{2}{c}{\normalsize GatedGCN} \\
\cmidrule(r){4-5} \cmidrule(lr){6-7} \cmidrule(l){8-9}
Dataset & Training & Metric & \multicolumn{1}{c}{Individual} & \multicolumn{1}{c}{Ensemble} & \multicolumn{1}{c}{Individual} & \multicolumn{1}{c}{Ensemble} & \multicolumn{1}{c}{Individual} & \multicolumn{1}{c}{Ensemble} \\
\midrule
\multirow{4}{*}{ZINC} & Supervised & \multirow{4}{*}{MAE $\downarrow$} &  \cellcolor{red!16} $.3163 \err{.0121}$ & $.2934 \err{.0094}$ &  \cellcolor{red!36} $.2765 \err{.0247}$ & $.2516 \err{.0136}$ &  \cellcolor{red!40} $.2920 \err{.0113}$ & $.2646 \err{.0235}$ \\
 & Consensus &  &  \cellcolor{green!37} ${.2406 \err{.0150}}$ &  \cellcolor{green!40} ${.2367 \err{.0148}}$ & $.2519 \err{.0246}$ &  \cellcolor{green!4} $.2485 \err{.0232}$ &  \cellcolor{red!10} $.2717 \err{.0230}$ &  \cellcolor{red!1} $.2658 \err{.0177}$ \\
 & Pairwise &  &  \cellcolor{green!33} $.2462 \err{.0108}$ &  \cellcolor{green!38} $.2390 \err{.0102}$ &  \cellcolor{green!2} ${.2500 \err{.0083}}$ &  \cellcolor{green!7} ${.2462 \err{.0092}}$ &  \cellcolor{red!1} ${.2653 \err{.0158}}$ &  \cellcolor{green!7} ${.2597 \err{.0171}}$ \\
 & Mean teacher &  &  \cellcolor{green!3} $.2884 \err{.0128}$ & -- &  \cellcolor{red!40} $.2791 \err{.0117}$ & -- &  \cellcolor{red!26} $.2830 \err{.0159}$ & -- \\
\midrule
\multirow{4}{*}{Peptides-struct} & Supervised & \multirow{4}{*}{MAE $\downarrow$} &  \cellcolor{red!40} $.3047 \err{.0098}$ & $.2932 \err{.0084}$ &  \cellcolor{red!40} $.2966 \err{.0067}$ & $.2918 \err{.0058}$ &  \cellcolor{red!40} $.2994 \err{.0105}$ & $.2908 \err{.0101}$ \\
 & Consensus &  &  \cellcolor{green!22} ${.2868 \err{.0062}}$ &  \cellcolor{green!22} ${.2866 \err{.0061}}$ &  \cellcolor{red!21} $.2944 \err{.0072}$ &  \cellcolor{red!16} $.2938 \err{.0068}$ &  \cellcolor{green!25} ${.2854 \err{.0061}}$ &  \cellcolor{green!27} ${.2848 \err{.0068}}$ \\
 & Pairwise &  & $.2933 \err{.0031}$ &  \cellcolor{green!13} $.2892 \err{.0029}$ &  \cellcolor{green!1} ${.2916 \err{.0030}}$ &  \cellcolor{green!14} ${.2901 \err{.0029}}$ &  \cellcolor{green!4} $.2898 \err{.0042}$ &  \cellcolor{green!17} $.2870 \err{.0041}$ \\
 & Mean teacher &  &  \cellcolor{red!18} $.2985 \err{.0029}$ & -- &  \cellcolor{red!25} $.2948 \err{.0023}$ & -- &  \cellcolor{red!20} $.2953 \err{.0034}$ & -- \\
\midrule
\multirow{4}{*}{Peptides-func} & Supervised & \multirow{4}{*}{AP $\uparrow$} &  \cellcolor{red!32} $.4931 \err{.0346}$ & $.5105 \err{.0342}$ &  \cellcolor{red!40} $.4566 \err{.0224}$ & $.4765 \err{.0327}$ &  \cellcolor{red!40} $.4289 \err{.0051}$ & $.4444 \err{.0200}$ \\
 & Consensus &  &  \cellcolor{red!6} ${.5070 \err{.0141}}$ &  \cellcolor{green!10} $.5160 \err{.0141}$ &  \cellcolor{red!1} ${.4756 \err{.0180}}$ &  \cellcolor{green!10} ${.4815 \err{.0179}}$ &  \cellcolor{green!16} ${.4509 \err{.0144}}$ &  \cellcolor{green!35} ${.4580 \err{.0062}}$ \\
 & Pairwise &  &  \cellcolor{red!9} $.5055 \err{.0151}$ &  \cellcolor{green!10} ${.5163 \err{.0150}}$ &  \cellcolor{red!5} $.4739 \err{.0110}$ &  \cellcolor{green!9} $.4811 \err{.0117}$ &  \cellcolor{green!4} $.4463 \err{.0067}$ &  \cellcolor{green!26} $.4548 \err{.0069}$ \\
 & Mean teacher &  &  \cellcolor{red!40} $.4893 \err{.0169}$ & -- &  \cellcolor{red!30} $.4611 \err{.0130}$ & -- &  \cellcolor{red!23} $.4352 \err{.0058}$ & -- \\
\midrule
\multirow{4}{*}{ogbg-molhiv} & Supervised & \multirow{4}{*}{AUROC $\uparrow$} &  \cellcolor{red!39} $.7216 \err{.0193}$ & ${.7357 \err{.0212}}$ &  \cellcolor{red!1} $.7329 \err{.0166}$ & $.7346 \err{.0165}$ &  \cellcolor{red!18} $.7312 \err{.0081}$ & $.7341 \err{.0107}$ \\
 & Consensus &  &  \cellcolor{red!13} ${.7308 \err{.0218}}$ & ${.7357 \err{.0212}}$ & ${.7339 \err{.0149}}$ & ${.7347 \err{.0153}}$ &  \cellcolor{green!12} $.7361 \err{.0069}$ &  \cellcolor{green!27} $.7383 \err{.0073}$ \\
 & Pairwise &  &  \cellcolor{red!30} $.7247 \err{.0160}$ &  \cellcolor{red!5} $.7336 \err{.0146}$ &  \cellcolor{red!8} $.7273 \err{.0128}$ &  \cellcolor{red!5} $.7294 \err{.0128}$ &  \cellcolor{green!21} ${.7375 \err{.0052}}$ &  \cellcolor{green!40} ${.7403 \err{.0050}}$ \\
 & Mean teacher &  &  \cellcolor{red!40} $.7213 \err{.0161}$ & -- &  \cellcolor{red!40} $.6996 \err{.0207}$ & -- &  \cellcolor{red!29} $.7295 \err{.0165}$ & -- \\
\midrule
\multirow{4}{*}{ogbg-molpcba} & Supervised & \multirow{4}{*}{AP $\uparrow$} &  \cellcolor{red!40} $.1368 \err{.0025}$ & $.1578 \err{.0030}$ &  \cellcolor{red!40} $.1421 \err{.0026}$ & $.1567 \err{.0029}$ &  \cellcolor{red!40} $.1615 \err{.0034}$ & $.1779 \err{.0043}$ \\
 & Consensus &  &  \cellcolor{red!19} ${.1476 \err{.0023}}$ &  \cellcolor{green!1} $.1585 \err{.0026}$ &  \cellcolor{red!19} $.1496 \err{.0033}$ & $.1567 \err{.0039}$ &  \cellcolor{red!19} ${.1701 \err{.0036}}$ & ${.1781 \err{.0034}}$ \\
 & Pairwise &  &  \cellcolor{red!20} $.1471 \err{.0027}$ &  \cellcolor{green!3} ${.1597 \err{.0028}}$ &  \cellcolor{red!18} ${.1498 \err{.0021}}$ &  \cellcolor{green!1} ${.1574 \err{.0024}}$ &  \cellcolor{red!25} $.1674 \err{.0032}$ &  \cellcolor{red!3} $.1765 \err{.0034}$ \\
 & Mean teacher &  &  \cellcolor{red!27} $.1435 \err{.0016}$ & -- &  \cellcolor{red!24} $.1479 \err{.0037}$ & -- &  \cellcolor{red!26} $.1669 \err{.0028}$ & -- \\
\bottomrule
\end{tabular}%
}
\end{table*}

\subsection{Computational Overhead}
Increasing the ensemble size reduces the error but also increases the computational overhead. We investigate this relationship for training and inference time on QM9 target ZPVE using the same setup as previous QM9 experiments. One epoch here is defined to mirror our QM9 Experiments, i.e. the full QM9 supervised data, while the SSL models also iterates over an equal amount of unlabeled batches. The results are shown in Figure \ref{computation_overhead}. From the error vs inference, we see how a single model from the SSL ensemble performs better than a full normal ensemble at a fraction of the inference time. The hyperparameters used for computing the graph was a batch size of 32 for training and 256 for inference speed with warmup, all on a single NVIDIA RTX A5000, and averaged over 10 epochs or 50 batches respectively. The experiments were performed with Pytorch version 2.7.1 and CUDA 11.8. 

\begin{figure*}[h!]
  \centering
  \includegraphics[width=0.95\columnwidth]{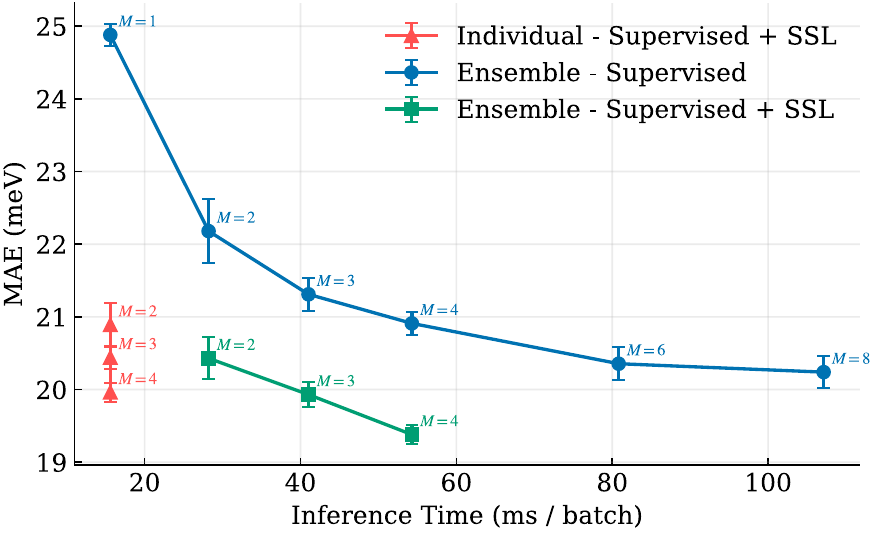}
  \includegraphics[width=0.95\columnwidth]{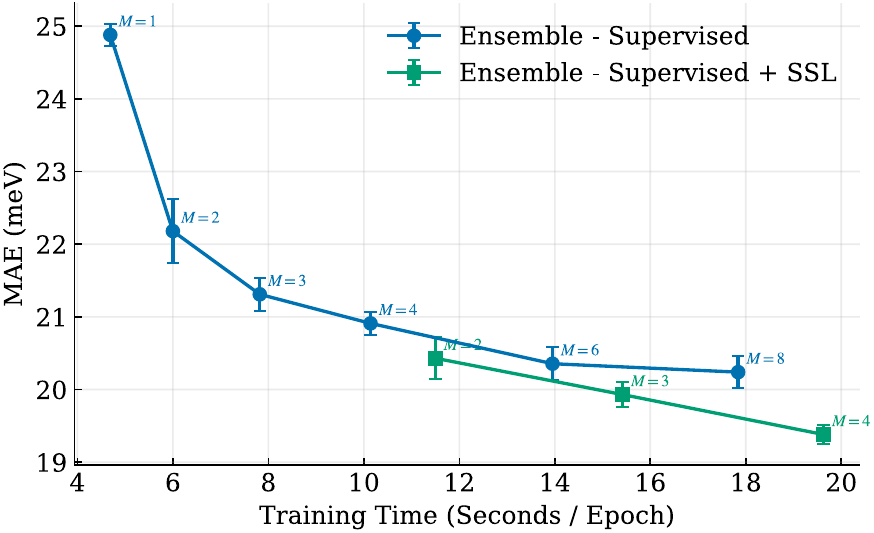}
  \caption{The validation error on QM9 for target ZPVE (target 7) as a function of (Left) inference time and (Right) training time. The results are averaged over 5 seeds, with a $95\%$ error of the mean visualised. }
  \label{computation_overhead}
\end{figure*}

\subsection{Variance of the Ensemble}\label{ensemble_variance}
Intuitively, the strength of coupling impacts the diversity of the ensemble. Using the same setup as previous QM9 experiments on target ZPVE, we investigate this in Figure \ref{fig:ensemble_variance}. We see that there still is still diversity in the predictions on validation data even for high coupling strengths, but it decreases with coupling strength. This underlines that the ensemble members are able to distil the generalised behavior of the full ensemble during training. Interestingly, the diversity on the training data actually increases initially with coupling strength. We suspect this is due to the reguralising effect that result in more robust models being learned. For completion, we also plot the variance within the ensemble over epochs and for unsupervised data in Appendix \ref{appendix:ensemble_variance}

\subsection{Unsupervised Learning} \label{appendix:unsupervised}
We also investigate how the coupled ensembles behave with data from another distribution, specifically PCQM4Mv2~\citep{DBLP:conf/nips/HuFRNDL21} and unsupervised methods. Specifically, we compare against Frad~\citep{DBLP:conf/icml/FengNLMM23}, which is an unsupervised model specifically designed for 3D-molecular datasets such as QM9. Frad works by denoising 3D positions of molecules, and can be used two different ways. Firstly to pre-train on a unlabeled dataset, referred to here as pre-trained. Secondly, as a unsupervised objective defined on both labeled and unlabeled data and optimised jointly with the supervised signal, noted here as denoising.
The experiments are conducted using the dataset PCQM4Mv2 as the unlabeled data and all labels from the training set of QM9. We reuse the architecture, hyperparameters, and code from the official implementation of Frad. Notably, the architecture of the models is changed from PaiNN to MDNet~\citep{DBLP:journals/corr/abs-2202-02541}. To make comparisons more fair computationally, we set the ensemble size to be 2. After an initial sweep over ensemble consensus coupling weights of $(10^{-2}, 10^{-3}, 10^{-4}, 10^{-5}, 10^{-6})$, we found that $\gamma=10^{-5}$ was optimal. The results of these experiments are shown in Table \ref{tab:unsupervised_table}. Each model in the ensemble is initialed from a model that was pre-trained on the unsupervised data using different seeds for each ensemble member.

\begin{table}[H]
\small
\centering
\caption{Error in MAE on target $\epsilon_{\text{HOMO}}$ on QM9, when using PCQM4Mv2 as the unlabeled dataset.}
\label{tab:unsupervised_table}
\begin{tabular}{c c c c c }
\toprule
Pre-train & Denoise & Consensus & Individual & Ensemble \\
\toprule
\checkmark &  &  & 19.26 & 17.38 \\
\checkmark & & \checkmark & 17.01 & 16.75\\
\checkmark & \checkmark & & 16.90 & 15.56\\
\checkmark & \checkmark  & \checkmark & 15.49 & 14.54 \\
 & \checkmark & & 20.18 & 18.10 \\
 & \checkmark  & \checkmark & 19.98 &17.01\\
 &  &  & 22.37 & 21.16 \\
 &  & \checkmark & 20.75 & 19.87 \\
\bottomrule
\end{tabular}
\end{table}
We generally observe that consensus training reduces the error slightly, but that the 3D-molecular denoising objective from Frad yield larger gains. Interestingly, we do see that using consensus improved upon the pretrained models, suggesting that the coupled ensembles can still extract additional information from the unlabeled data. This shows that a modality-specific objective is still preferred, but ensemble consensus when computational resources allows. 
Notable, we did observe some training instabilities when the using consensus with pre-trained models. We hypothesize this is due to the consensus forcing the model predictions to be the same initially, which could result in the learned representations from pre-training being lost. 

\section{Discussion}
\begin{figure*}[h!]
  \centering
  \includegraphics[width=0.95\columnwidth]{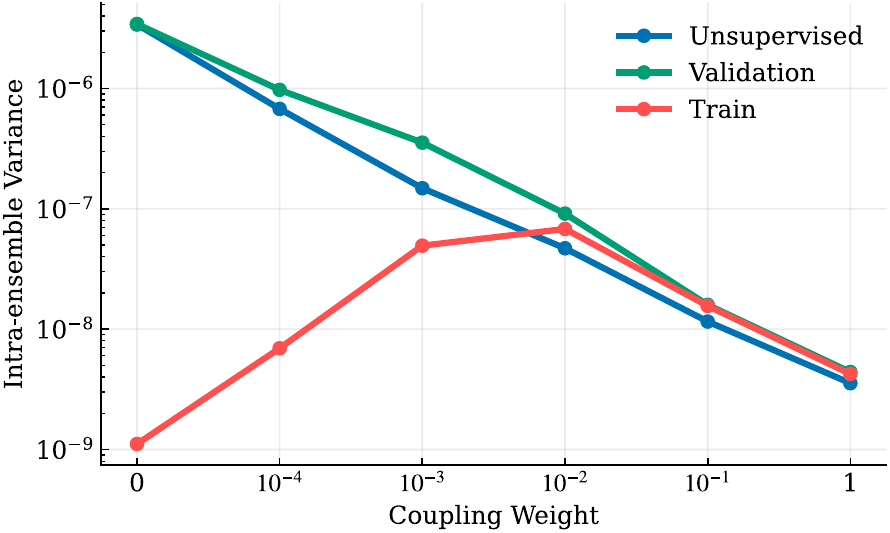}
  \includegraphics[width=0.95\columnwidth]{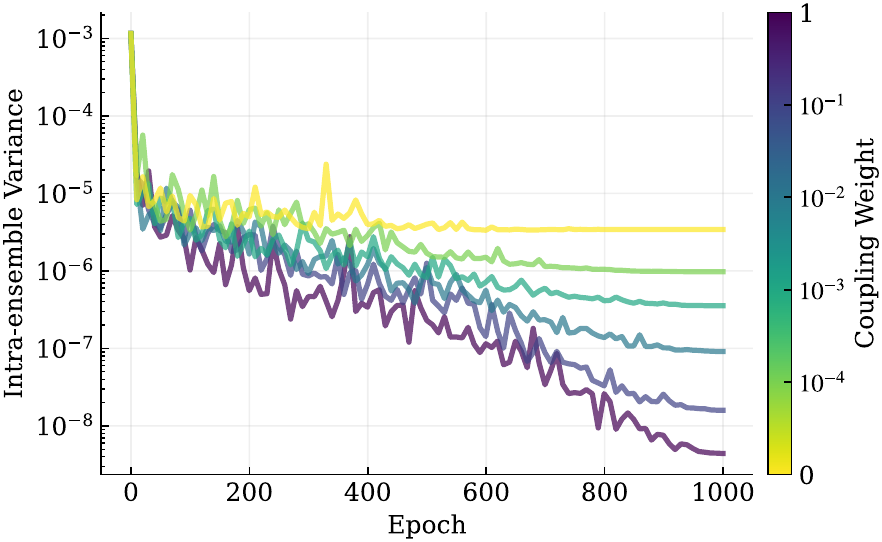}
  \caption{The variance within the ensemble across coupling weights on validation data (Left) and data splits versus training epochs (Right) for QM9 target ZPVE.}\label{fig:ensemble_variance}
\end{figure*}
Our experiments on QM9 and the more varied GNN+ benchmark show that our ensemble-based SSL framework consistently improves model predictive performance in low-data regimes. The most significant finding is the substantial boost in accuracy for individual models, a direct result of the knowledge transferred from the ensemble's consensus on unlabeled data. This finding is similar to the idea of ensemble distillation~\citep{hinton2015distilling}, where the knowledge of an ensemble is transferred to a single, smaller model. A new effect observed for the dataset (QM9) with a tuned coupling weight, where the individual models obtain a better predictive accuracy than the entire consensus ensemble. This can be seen as semi-supervised effect where the gradual improvement of a single model results in even better ensemble consensus targets for individual models to learn from. A possible explanation is the effect described in appendix \ref{appendix:theoretical_convergence}, where the models become more robust in their prediction. Specifically, we find that the Hessian of the validation loss decreases (Figure \ref{fig:front} (right)), and that the sensitivity of the prediction to changes (Appendix \ref{appendix:curvature_appendix}) both decreases when increasing the coupling weight.

This has a key practical benefit: while the method requires an ensemble during training, a single, improved model can be deployed for inference. This offers a valuable trade-off, where an increased one-time training cost yields a final model that is both highly accurate and computationally efficient at inference time. Chemical property screening is a compelling use-case, as vast databases of molecules need to be screened, resulting in a high inference cost, while the available labeled data and models are small, making training cheap.

It is noteworthy that for datasets where the parameter related to SSL ($\gamma$ or the mean-teacher decay) was not directly tuned, the improvement in predictive accuracy was noticeably smaller. This indicates the SSL parameter is highly dependent on the specific dataset. This finding is also supported in Appendix \ref{appendix:couplingStrategies} and \ref{Appendix:coupling_strategy}, which shows that picking the best (constant) coupling weight is more impactful than exploring more sophisticated coupling strategies such as warm up of the consensus loss.

As shown in Table \ref{tab:qm9_results_different_ensemble_sizes}, the predictive performance scales with the number of members in the consensus ensemble. Individual models from the ensemble trained with our method consistently perform at a similar level to an entire traditional ensemble across all ensemble sizes. This finding is further supported in Appendix~\ref{appendix:scaling_ensemble_size} for up to 32 ensemble members, with different architectures and datasets. This benefit comes at the cost of roughly 2x slower training time as detailed in Figure \ref{computation_overhead} as there simply is the additional unlabeled data to iterate over. Due to the ensemble members only communicating their predictions, the ensemble size can largely be mitigated by distributing the ensemble members across different devices and collecting predictions with \textsc{all\_gather}/\textsc{all\_reduce}, for example\footnote{\url{https://github.com/lauritsf/torch-distributed-ensemble}}.

In Appendix \ref{appendix:calibration_hiv} we see that the calibration of the individual ensemble members on ogbg-molhiv is improved by coupling. This indicates that the ensemble members also learns some of the uncertainty captured in the ensemble. This finding is further supported in Appendix \ref{appendix:scaling_ensemble_size}, where increasing the ensemble size consistently results in improved calibration for both the individual models and the full ensemble. The results for the calibration of the full ensemble is unclear. In ogbg-molhiv the calibration of the full coupled ensemble is generally better than the decoupled ensemble, but in Appendix \ref{appendix:scaling_ensemble_size} both the ECE and Brier score suffers from coupling. It is not clear if this difference in findings is due to the different dataset and modality (graph versus image), or due to the coupling in Appendix \ref{appendix:scaling_ensemble_size} not being chosen optimally. 

We found that ensemble consensus can improve upon models pretrained or jointly trained with modality specific methods such as 3D positional denoising in molecules. While the improvement was small, it illustrates that ensemble consensus can be combined with other methods to improve model performance when the computational budget allows.

\paragraph{Limitations} 
The primary limitation of our approach is the computational overhead associated with training an ensemble consensus model. In addition to having to train $M$ identical models, the size of the training mini-batches is doubled, as it now includes both the unlabeled mini-batch in addition to the labeled mini-batch. This means the computation required roughly scales as $\mathcal{O}(2M)$. Observed timing can be found in Appendix \ref{app:computational_scaling}.
This overhead can be prohibitive for training large models, such as foundation models. Coupled ensembles are more practical in for smaller models or models trained on embeddings or latent representations.

\paragraph{Future work}
Our findings suggest several promising avenues for future research. While this work created a semi-supervised split from a fully labeled dataset, a compelling next step would be to use all available labeled data for supervision while introducing a separate, truly unlabeled dataset. This would more directly quantify the benefit of leveraging vast, external chemical libraries and be of interest in a practical setting. Using ensembles for semi-supervised learning also opens the direction for improving accuracy in a principled manner by diversifying the ensemble members through existing techniques. Ideally in a way that can reduce the computation required. 
Furthermore, different strategies for how to couple an ensemble can be investigated. Our experiments (Appendix~\ref{appendix:couplingStrategies}) suggest that a constant coupling weight for ensemble consistency weight loss throughout the whole training yield the best results. Ideally, a more sophisticated weighting schedule could allow for coupling only in the later stages of the training to reduce the computational overhead.

\section*{Impact Statement}
This paper presents work that strives to advance the field of semi-supervised learning for especially molecular data. 
Machine learning is increasingly becoming a central tool for many chemical tasks. Of especial impact is it’s increasing use in development of new medicine. We hope that the work presented here can help further improve the deep learning models used for such tasks to hopefully reduce the cost of development of medicine. Reducing the development cost should reduce the price of new medicine, making it more accessible.

\section*{Acknowledgements}
The authors acknowledge support from the Novo Nordisk Foundation under grant no NNF22OC0076658 (Bayesian neural networks for molecular discovery).

\bibliography{main}
\bibliographystyle{style/icml2026}

\newpage
\appendix
\onecolumn
\appendix
\raggedbottom
\section*{Supplementary material}
\section{Theoretical Analysis}\label{appendix:theoretical_convergence}
\subsection{Ensemble Consensus for Linear Models}

\begin{theorem}
\label{thm:linear_consensus}
  Assuming all ensemble members $f_{\theta_m}$ are linear models and the prediction $\bar f(x)$ of the ensemble is the minimizer of the sum of individual consensus losses, then $\bar f(x)$ stays constant during consensus updates.
\end{theorem}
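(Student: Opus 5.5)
We work with linear members $f_{\theta_m}(x) = \theta_m^\top \phi(x)$ (an affine bias can be absorbed into $\phi$) and look only at the consensus part of the update. The loss is $\sum_m D(f_{\theta_m}(u), \bar f(u))$ with the target $\bar f$ detached. The plan is to show that one gradient step on this loss moves the parameters in a way that leaves the average prediction unchanged.

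First we use the hypothesis that $\bar f$ minimizes the sum of consensus losses. For each unlabeled $u$, the target $\bar f(u)$ is then a stationary point of $z \mapsto \sum_m D(f_{\theta_m}(u), z)$. For the MSE distance $D(a,z)=(a-z)^2$ this stationarity reads $\sum_m \big(f_{\theta_m}(u)-\bar f(u)\big)=0$, which is exactly the arithmetic mean. For a general $D$ we need $\partial_1 D(a,z)=-\partial_2 D(a,z)$, and the hypothesis then gives
\[
\sum_m \partial_1 D\big(f_{\theta_m}(u),\bar f(u)\big)=0 \quad\text{for every } u .
\]
This holds for any translation-invariant distance, $D(a,z)=g(a-z)$.

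Next we compute the update. With step size $\eta$ and $\bar f$ detached, each member moves by
\[
\Delta\theta_m = -\eta\,\frac{\gamma}{|B_U|}\sum_{u\in B_U} \partial_1 D\big(f_{\theta_m}(u),\bar f(u)\big)\,\phi(u).
\]
Because the model is linear in $\theta$, the new ensemble prediction at an arbitrary $x$ is
\[
\bar f^{\text{new}}(x)=\frac1M\sum_m(\theta_m+\Delta\theta_m)^\top\phi(x)=\bar f(x)+\frac1M\Big(\sum_m\Delta\theta_m\Big)^\top\phi(x).
\]
Summing the update over $m$ and exchanging the sums gives
\[
\sum_m\Delta\theta_m \propto \sum_{u}\Big(\sum_m \partial_1 D\big(f_{\theta_m}(u),\bar f(u)\big)\Big)\phi(u)=0,
\]
by the stationarity identity. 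Hence $\bar f^{\text{new}}=\bar f$ everywhere, not only on the batch. The conclusion is specific to linear models. For nonlinear members the update changes $f_{\theta_m}$ by $\nabla_\theta f_{\theta_m}(x)^\top\Delta\theta_m$, and these Jacobians differ across members. The cancellation then fails, which is the extra parameter-sensitivity term anticipated in the main text. Gradient flow gives the same result, and so do SGD steps, since each step preserves $\bar f$ and the identity can be applied inductively.

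The main obstacle is the first step: making precise which $D$ and which averaging make the stated minimizer property equivalent to a zero sum of member gradients. For MSE this is immediate. For KL we would take $D$ to be KL, treat the members as linear in logit space, and use the geometric-mean (log-space) average. The minimizer of $\sum_m \mathrm{KL}$ with respect to the logit-space target then makes the summed logit gradients cancel. We would verify this identity, and otherwise restrict the theorem to $D$ of the form $g(a-z)$.
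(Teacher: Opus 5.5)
Your proposal is correct and is essentially the paper's argument: linearity makes each member's consensus gradient a scalar times the shared feature vector $\phi$. First-order optimality of $\bar f$ for a loss of the form $g(a-z)$ (the paper's $\mathcal{L}(y-f)$) then forces the member-side derivatives to sum to zero, so $\sum_m \Delta\theta_m = 0$ and $\bar f$ is unchanged; your batch version, which gives invariance at every $x$, is a slightly tidier statement of the same computation.

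The tentative KL extension would not work as you describe it. For KL the member-side and target-side derivatives are not negatives of each other, so the members' logit gradients do not cancel at either the geometric-mean or the arithmetic-mean minimizer. Your fallback restriction to translation-invariant $D$ is exactly the setting the paper's proof covers.
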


\begin{proof}
Let $\theta^m$ be the weights of the individual ensemble members and $\theta$ be the combined parameters of the ensemble.
Under a consensus update, the change is
\begin{equation*}
    \theta_{t+1}=  \theta_t - \eta \nabla_\theta \mathcal{L} (\bar{f_{\theta_t}} (x)-f_{\theta_{t}^m}(x) ).
\end{equation*}
Meaning for some feature vector $\phi(x)$
\begin{align*}
     f_{\theta_{t+1}}(x)&=  \big(\theta_t -\eta \nabla_\theta \mathcal{L} (\bar{f_{\theta_t}} (x)-f_{\theta_{t}^m}(x) )\big)^T\phi(x)
     \\
     &=\theta_t^T\phi(x) - \eta \nabla_\theta \mathcal{L} (\bar{f_{\theta_t}} (x)-f_{\theta_{t}^m}(x) )^T\phi(x)
     \\
     &=f_{\theta_{t}}(x) - \eta \nabla_\theta \mathcal{L} (\bar{f_{\theta_t}} (x)-f_{\theta_{t}^m}(x) )^T\phi(x).
\end{align*}
Now for the members respectively
\begin{align*}
    f_{\theta_{t+1}^m}(x) = f_{\theta_{t}^m}(x) -\eta \nabla_{\theta^m} \mathcal{L}(\bar{f_{\theta_t}} (x)-f_{\theta_{t}^m}(x) )^T\phi(x) .
\end{align*}

Since all the ensemble members are linear, it follows that $\nabla_{\theta^m}f_{\theta_t^m}(x)=\phi(x)$. 
This means:
\begin{align*}
&\bar f_{\theta_{t+1}}(x) -\bar f_{\theta_t}(x)
\\
&=\frac{1}{M}\sum_{m=1}^Mf_{\theta_{t}^m}(x) -\eta \frac{1}{M}\sum_{m=1}^M\nabla_{\theta^m} \mathcal{L}(\bar{f_{\theta_t}} (x)-f_{\theta_{t}^m}(x) )^T\phi(x)  -\frac{1}{M}\sum_{m=1}^Mf_{\theta_{t}^m}(x) 
\\
&= \frac{-1}{M}\sum_{m=1}^M \eta \nabla_{\theta^m} \mathcal{L}(\bar{f_{\theta_t}} (x)-f_{\theta_{t}^m}(x) ) ^T\phi(x)
\\
&= -\frac{\eta}{M}\sum_{m=1}^M \nabla \mathcal{L}(\bar{f_{\theta_t}} (x)-f_{\theta_{t}^m}(x) )^T\phi(x)
\\
&= -\frac{\eta}{M}\Big(\sum_{m=1}^M \nabla \mathcal{L}(\bar{f_{\theta_t}} (x)-f_{\theta_{t}^m}(x) )\Big)^T\phi(x)
\\
&= -\frac{\eta}{M}\underbrace{\Big(\sum_{m=1}^M \nabla \mathcal{L}(\bar{f_{\theta_t}} (x)-f_{\theta_{t}^m}(x) )\Big)^T }_{0=\text{ if $\bar{f_{\theta_t}} (x) $ minimizes the sum of losses}}\phi(x) \numberthis \label{eq:zero_linear_gradient}
\\
&= 0.
\end{align*}
This above derivation relies on $\mathcal{L}$ being a proper loss function for the ensemble, i.e. that $\bar{f_{\theta_t}} (x) $ , as it minimises the loss as by first order optimality conditions
\begin{align*}
  \nabla_y\Big[\sum^M_{m=1}\mathcal{L}(y-f_{\theta_{t}^m}(x))\Big]_{y=\bar{f_{\theta_t}}(x)}=0,
\end{align*}
implying that
\begin{align*}
  \sum^M_{m=1}  \nabla_{\bar{f_{\theta_t}}}\mathcal{L}(\bar{f_{\theta_t}}(x) - f_{\theta_{t}^m}(x))=0.
\end{align*}
Using the chain rule
\begin{align*}
   \nabla_{\bar f_{\theta}}\mathcal{L}(\bar{f_{\theta_t}} (x)-f_{\theta_{t}^m}(x)) =- \nabla_{f_{\theta^m}}\mathcal{L}(\bar{f_{\theta_t}} (x)-f_{\theta_{t}^m}(x))  ,
\end{align*}
we get
\begin{align*}
  \sum^M_{m=1} \Big[ \nabla_y\mathcal{L}(y-f_{\theta_{t}^m}(x))\Big]_{y=\bar{f_{\theta_t}}(x)}=\sum^M_{m=1}- \nabla_{f_{\theta^m}}\mathcal{L}(\bar{f_{\theta_t}} (x)-f_{\theta_{t}^m})=0.
\end{align*}
\end{proof}
Note that the choice of loss function dictates what prediction the individual models converge to.
The minimizer of MSE loss the arithmetic mean of the ensemble members, while for MAE, the ensemble prediction should be the median or all satisfy this condition. Under these conditions the ensemble prediction remains unchanged under purely consensus updates for linear models as the individual gradient contributions from the members cancels out. 
\begin{lemma} \label{linear_convergence}
    Assume the same conditions as Theorem \ref{thm:linear_consensus} and a coupled ensemble of linear models $\bar{f}_{t}^{\text{coupled}}$ optimised under gradient descent with the a joint consensus objective function i.e. Equation \ref{eq:single_model_loss}. Then the sequence of ensemble models from each step of optimisation are exactly the same for the coupled and a decoupled ensembles, i.e. $\bar{f}^{\text{decoupled}}_t=\bar{f}^{\text{coupled}}_t$, $\forall t$. As a result if the decoupled ensemble converge, the coupled ensemble converges to the same model.
\end{lemma}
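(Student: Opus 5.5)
We argue by induction on the optimisation step $t$, comparing the coupled ensemble with a decoupled ensemble started from the same initial weights $\theta_0^m$ (same seeds, same labeled batches $B_L$, same step size $\eta$). Write $\theta_t^{m,\mathrm{c}}$ and $\theta_t^{m,\mathrm{d}}$ for the coupled and decoupled member weights. The claim to prove by induction is that the ensemble predictions agree, $\bar f^{\mathrm{c}}_t(x)=\bar f^{\mathrm{d}}_t(x)$ for all $x$. Individual members are \emph{not} expected to agree. We therefore work with the averaged parameter $\bar\theta_t=\frac1M\sum_m\theta_t^m$. For linear models $f_{\theta^m}(x)=(\theta^m)^T\phi(x)$ we have $\bar f_t(x)=\bar\theta_t^T\phi(x)$, so it suffices to show $\bar\theta^{\mathrm{c}}_t=\bar\theta^{\mathrm{d}}_t$ for all $t$. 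The base case holds by the shared initialisation.

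For the inductive step, decompose the coupled update of member $m$ using Equation~\ref{eq:single_model_loss}. Its gradient is $\nabla_{\theta^m}\mathcal L_{\mathrm{sup}}(f_{\theta^m},B_L)+\gamma\nabla_{\theta^m}\mathcal L_{\mathrm{consistency}}$, with $\bar f$ detached. Averaging over $m$ splits the change of $\bar\theta$ into two parts.

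\textbf{Consensus part.} This equals $-\frac{\eta\gamma}{M}\sum_m\nabla_{\theta^m}\mathcal L_{\mathrm{consistency}}$. By linearity the gradient is $\frac{1}{|B_U|}\sum_{u}\nabla_{f}D(f_{\theta^m}(u),\bar f(u))\,\phi(u)$. Summing over $m$ gives $\sum_u\big(\sum_m\nabla_f D(f_{\theta^m}(u),\bar f(u))\big)\phi(u)$. The inner sum vanishes by exactly the first-order optimality argument in the proof of Theorem~\ref{thm:linear_consensus}, i.e.\ equation~\eqref{eq:zero_linear_gradient}, using the assumption that $\bar f$ minimises the summed consensus loss. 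So the consensus part contributes nothing to $\bar\theta_{t+1}$.

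\textbf{Supervised part.} This equals $-\frac{\eta}{M}\sum_m\nabla_{\theta^m}\mathcal L_{\mathrm{sup}}$. Here is the main obstacle: the supervised gradient of member $m$ depends on $\theta^m$ itself, and the coupled and decoupled members differ individually. For the induction to close, the averaged supervised gradient must depend only on $\bar\theta$. For MSE this holds. The gradient $\frac{2}{|B_L|}\sum_{(x,y)}(\theta^{mT}\phi(x)-y)\phi(x)$ is affine in $\theta^m$, so its average over $m$ equals the same expression evaluated at $\bar\theta$. Then $\bar\theta_{t+1}=\bar\theta_t-\eta\nabla\mathcal L_{\mathrm{sup}}(\bar\theta_t)$ in both the coupled and decoupled runs. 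The induction hypothesis $\bar\theta^{\mathrm c}_t=\bar\theta^{\mathrm d}_t$ then yields equality at $t+1$. Weight decay is also linear and passes through the average in the same way.

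I would therefore state that the argument relies on the supervised gradient being affine in the member output, as for squared loss with the arithmetic-mean consensus. Under that reading of ``the same conditions as Theorem~\ref{thm:linear_consensus}'', the identity $\bar f^{\mathrm c}_t=\bar f^{\mathrm d}_t$ holds for all $t$. Convergence follows immediately: if $\bar f^{\mathrm d}_t\to\bar f^{\mathrm d}_\infty$, then the identical sequence $\bar f^{\mathrm c}_t$ has the same limit. For non-affine supervised losses the averaging step fails, and I would flag that as the precise point where the lemma needs that restriction.
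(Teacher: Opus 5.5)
Your proof is correct and takes essentially the paper's route: induction on $t$ from a shared initialisation, then splitting each update into a supervised part and a consensus part. The consensus contribution to the ensemble is killed by the first-order optimality identity~\eqref{eq:zero_linear_gradient}, and the remaining supervised updates are matched. Where you differ, you are more careful than the paper. The paper simply writes the supervised step as $\nabla_\theta\mathcal{L}_{\text{supervised}}(\bar f_{\theta_t})$, silently assuming that the averaged member gradients depend only on the ensemble. You justify that step by working with $\bar\theta_t$, and you correctly note that it holds because the squared-loss gradient is affine in $\theta^m$ and would fail for non-affine supervised losses.
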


\begin{proof}
We will now prove that optimising the overall loss, i.e. jointly the supervised and unsupervised loss, the coupled ensembles converge to the same same as the decoupled ensembles for linear models.
Let $\{ \bar{f}^{\text{decoupled}}_t\}^n_{t=1}$ and $\{ \bar{f}^{\text{coupled}}_t\}^n_{t=1}$, be a sequence from a decoupled and coupled ensemble over $t$ gradient steps. We will prove using induction that at each step of the equation that the coupled ensemble predictions $\{ \bar{f}^{\text{coupled}}_t\}^n_{t=1}$  are equal to the decoupled ensemble, i.e. $\bar{f}^{\text{decoupled}}_t=\bar{f}^{\text{coupled}}_t$, $\forall t$.
The individual models starts at with the same parameters, so $\bar{f}_{0}^{\text{coupled}}=\bar{f}_{0}^{\text{decoupled}}$.
Using induction, we can assume for $t$ that $\bar{f}_{t}^{\text{coupled}}=\bar{f}_{t}^{\text{decoupled}}$.
Since each member $f_{\theta^m}$ is linear, it follows that $\bar f_{\theta}$ is also linear. Hence, the update rule can be written as
\begin{equation*}
     \bar f_{\theta_{t+1}}(x)=   \bar f_{\theta_{t}}(x) -\eta \nabla_{\theta} \mathcal{L}(\bar{f_{\theta_t}} )^T\phi(x).
\end{equation*}
We now consider the joint optimization of the supervised and ensemble consensus objective.
Since the joint objective is additive, we can split the gradient for update $t$ update as
\begin{align*}
    \nabla_{\theta} \mathcal{L}\big( \bar f_{\theta_{t}}) =  \nabla_{\theta} \mathcal{L}_{\text{supervised}}(f_{\theta_{t}}) +\gamma  \nabla_{\theta} \mathcal{L}_{\text{consistency}}(f_{\theta_{t}}).
\end{align*}
The update rule now becomes

\begin{align*}
    \bar{f}_{\theta_{t+1}}^{\text{coupled}} &= \bar f_{\theta_{t}}^{\text{coupled}} -\eta \nabla_{\theta} \mathcal{L}(\bar{f}_{\theta_{t}}^{\text{coupled}}) ^T\phi(x)
    \\
    &= \bar f_{\theta_{t}}^{\text{decoupled}} -\eta \nabla_{\theta} \mathcal{L}(\bar{f}_{\theta_{t}}^{\text{decoupled}}) ^T\phi(x)
    \\
    &=  \bar f_{\theta_{t}}^{\text{decoupled}} -\eta \big(\nabla_{\theta} \mathcal{L}_{\text{supervised}}(\bar{f_{\theta_{t}}}^{\text{decoupled}}) +\gamma  \nabla_{\theta} \underbrace{\mathcal{L}_{\text{consistency}}(\bar{f}_{\theta_{t}}^{\text{decoupled}}) \big)^T\phi(x)}_{=0, \text{ using Equation \ref{eq:zero_linear_gradient}}}
    \\
    &=  \bar f_{\theta_{t}}^{\text{decoupled}} -\eta (\nabla_{\theta} \mathcal{L}_{\text{supervised}}(\bar{f_{\theta_{t}}}^{\text{decoupled}})  )^T\phi(x)
    \\
    &= \bar{f}_{\theta_{t+1}}^{\text{decoupled}}.
\end{align*}
We have now proved for $t+1$ that $\bar{f}_{\theta_{t+1}}^{\text{coupled}} =\bar{f}_{\theta_{t+1}}^{\text{decoupled}} $, so by induction, we see the two sequences $\{ \bar{f}^{\text{decoupled}}_t\}^n_{t=1}$ and $\{ \bar{f}^{\text{coupled}}_t\}^n_{t=1}$ and must converge (or diverge) identically.
\end{proof}

Lemma \ref{linear_convergence} shows that the coupled linear ensemble converges to the decoupled ensemble. This means that the individual models tend to converge to the ensemble prediction, acting as purely ensemble distilling. This also means that if we interpret the individual models in an ensemble to be sampled from a prior, then the individual models will converge to predicting the posterior mean under training i.e. $\lim_{M\rightarrow \infty}\frac{1}{M} \sum^M_{m=1}f_{\theta^m}(x)$.
Note that it is not always the case that individual models converge to the coupled ensemble prediction. If the coupling weight is too high, the individual models can diverge in an analogous way as too high learning rate under supervised losses in gradient descent.

\subsection{Non-linear Case}
\begin{theorem} \label{thm:non-linear_consensus}
    Assuming all ensemble members, $f_{\theta_m}$, are non-linear models and the prediction $\bar f(x)$ of the ensemble is the minimizer of the sum of individual consensus losses, then $\bar f(x)$ does \textbf{NOT} necessarily stay constant during consensus updates.
\end{theorem}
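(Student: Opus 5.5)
Since the claim is negative ("does not necessarily stay constant"), one explicit counterexample suffices. I will also derive the general first-order expression for the change in $\bar f$, so the mechanism is visible. I take the MSE consensus loss $\mathcal{L}(\bar f(x)-f_{\theta^m}(x)) = \tfrac12(\bar f(x)-f_{\theta^m}(x))^2$ with $\bar f$ detached. For this loss the arithmetic mean is exactly the minimizer of the summed consensus losses, so the hypothesis of the statement holds.

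First, I redo the computation from the proof of Theorem~\ref{thm:linear_consensus} without linearity. A gradient step on member $m$ gives
\begin{equation*}
\theta^m_{t+1} = \theta^m_t + \eta\, r_m\, \nabla_{\theta^m} f_{\theta^m_t}(x),
\qquad r_m := \bar f_{\theta_t}(x) - f_{\theta^m_t}(x).
\end{equation*}
A Taylor expansion of $f_{\theta^m}(x)$ around $\theta^m_t$ then gives the change in the ensemble prediction:
\begin{equation*}
\bar f_{\theta_{t+1}}(x)-\bar f_{\theta_t}(x) = \frac{\eta}{M}\sum_{m=1}^M r_m \,\|\nabla_{\theta^m} f_{\theta^m_t}(x)\|^2 + O(\eta^2).
\end{equation*}
In the linear case every $\nabla_{\theta^m} f = \phi(x)$, so the weights $\|\nabla f\|^2$ are equal. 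They factor out, and $\sum_m r_m = 0$ kills the first-order term, which is exactly the cancellation in Eq.~\eqref{eq:zero_linear_gradient}. In the nonlinear case the members have different parameter sensitivities $\|\nabla_{\theta^m} f\|^2$. The sum becomes a weighted sum of residuals that sum to zero, and that is generically nonzero.

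Second, I build a concrete instance so no genericity argument is needed. Take $M=2$, a scalar parameter per member, and $f_\theta(x) = \theta^2$ for a fixed input $x$ (or $f_\theta(x)=\theta^2 x$ with $x=1$). Choose $\theta^1_t = 1$ and $\theta^2_t = 2$. Then $f_1 = 1$, $f_2 = 4$, $\bar f = 2.5$, $r_1 = 1.5$ and $r_2 = -1.5$. The gradients are $\nabla f_1 = 2$ and $\nabla f_2 = 4$. The updates are $\theta^1_{t+1} = 1 + 3\eta$ and $\theta^2_{t+1} = 2 - 6\eta$. Hence
\begin{equation*}
\bar f_{t+1} = \tfrac12\big((1+3\eta)^2 + (2-6\eta)^2\big) = 2.5 - 9\eta + 22.5\eta^2,
\end{equation*}
which differs from $2.5$ for every sufficiently small $\eta>0$, so $\bar f$ is not constant. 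This exact computation also avoids relying on the $O(\eta^2)$ remainder.

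The only delicate point is making sure the counterexample respects the hypothesis, namely that $\bar f$ really is the minimizer of the summed consensus loss at time $t$. For MSE this holds by construction. Beyond that, I would add a remark interpreting the sign: the mean drifts toward the member with the smaller gradient norm, i.e.\ the less parameter-sensitive prediction. This connects the counterexample to the flatness bias discussed in the main text.
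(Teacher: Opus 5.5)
Your proposal is correct and follows the paper's route. The paper's proof is a Taylor expansion for $M=2$ showing that the change in $\bar f$ contains a term proportional to $\|\nabla_{\theta^2} f_{\theta^2_t}(x)\|_2^2-\|\nabla_{\theta^1} f_{\theta^1_t}(x)\|_2^2$ plus a curvature term, and it leaves to the reader ``a function where the Taylor-approximation is exact''. Your instance $f_\theta=\theta^2$, $\theta^1_t=1$, $\theta^2_t=2$ is exactly such a function, and the computation checks out: $\bar f_{t+1}=2.5-9\eta+22.5\eta^2$, which changes for every $\eta>0$ except $\eta=0.4$.
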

\begin{proof}
    Follows directly from the derivation below, we leave it to the reader to find a function where the Taylor-approximation is exact and the second order term is non-zero.
\end{proof}

In the non-linear case the higher-order interactions change the learning dynamics. Instead of pure knowledge distillation in the linear case, the members learn to have lower variance in the predictions. 
Much like in a tug-of-war where the heaviest team wins, the members with the most robust predictions will generally dominate.

To see this, consider two models $f_{\theta_1}, f_{\theta_2}$. The change in the prediction of the ensemble prediction is
\begin{align}
    \bar f_{\theta_{t+1}}-\bar f_{\theta_t} =\frac{1}{2}\left(f_{\theta^1_{t+1}}(x)-f_{\theta^1_t}(x)\right) +\frac{1}{2} \left(f_{\theta^2_{t+1}}(x)-f_{\theta^2_t}(x)\right).\label{eq:ensemble_diff}
\end{align}
Here $\theta_1^t$ is the parameters of model $1$ after $t$ gradient updates.
To calculate the ensemble prediction difference, we can approximate the parameter update $f_{\theta^1_{t+1}}(x)$ with a Taylor expansion:
\begin{equation*}
    f_{\theta^1_{t+1}}(x) \approx f_{\theta^1_t}(x) + \delta_{\theta_t^1}^T\nabla_{\theta^1} f_{\theta^1_t}(x)+\frac{1}{2} \delta_{\theta^1_t}^T\nabla_{\theta_t^1}^2f_{\theta^1_t}(x)\delta_{\theta_t^1} + \mathcal{O}(\nabla^3f_{\theta^1_t}(x)).
\end{equation*}
Calculating the change in prediction for $f_{\theta^1}$, we get
\begin{align*}
     f_{\theta^1_{t+1}}(x)-f_{\theta^1_t}(x)  &\approx f_{\theta^1_t}(x) + \delta_{\theta^1_t}^T\nabla_{\theta^1} f_{\theta^1_t}(x)+\frac{1}{2} \delta_{\theta^1_t}^T\nabla_{\theta^1}^2f_{\theta^1_t}(x)\delta_{\theta^1_t} - f_{\theta^1_t}(x)
     \\
     &=\delta_{\theta^1_t}^T\nabla_{\theta^1} f_{\theta^1_t}(x)+\frac{1}{2} \delta_{\theta^1_t}^T\nabla_{\theta^1}^2f_{\theta^1_t}(x)\delta_{\theta^1_t}.
\end{align*}
Here the parameter change $\delta_{\theta^1_t}= \theta^1_{t+1}- \theta^1_{t}$ is given as
\begin{align*}
    \delta_{\theta^1_t} &=  \theta^1_{t+1}-\theta^1_{t}
    \\
    &=\big(\theta^1_{t}-\eta \nabla_{\theta^1}\mathcal{L}(f_{\theta^1_t}(x)-\bar f(x))\big) - \theta^1_{t}
    \\
    &=-\eta \nabla_{\theta^1}\mathcal{L}\Big(f_{\theta^1_t}(x)-\frac{1}{2}\big(f_{\theta^1_t}(x)+f_{\theta^2_t}(x)\big)\Big)
    \\
    &= -\eta \nabla_{\theta^1}\mathcal{L}\Big(\frac{f_{\theta^1_t}(x)-f_{\theta^2_t}(x)}{2}\Big)
    \\
    &= -\frac{\eta}{2} \left( \nabla_{\theta^1} f_{\theta^1}(x) \right) \nabla \mathcal{L}\left( \frac{f_{\theta^1}(x) - f_{\theta^2}(x)}{2} \right)
    \\
    &=-\frac{\eta C}{2} \left( \nabla_{\theta^1} f_{\theta^1_t}(x) \right), \quad \text{where $C=\nabla \mathcal{L}\left( \frac{f_{\theta^1}(x) - f_{\theta^2}(x)}{2} \right)$}.
\end{align*}
Now
\begin{align*}
& f_{\theta^1_{t+1}}(x)-f_{\theta^1_t}(x) 
\\
    \approx&\delta_{\theta^1_t}^T\nabla_{\theta^1} f_{\theta^1_t}(x)+\frac{1}{2} \delta_{\theta^1_t}^T\nabla_{\theta^1}^2f_{\theta^1_t}(x)\delta_{\theta^1_t}
    \\
    =&-\frac{\eta C}{2} \left( \nabla_{\theta^1} f_{\theta^1_t}(x)^T \nabla_{\theta^1} f_{\theta^1_t}(x)\right)+\frac{1}{2} \delta_{\theta^1_t}^T\nabla_{\theta^1}^2f_{\theta^1_t}(x)\delta_{\theta^1_t}
    \\
        =&-\frac{\eta C}{2} ||\nabla_{\theta^1} f_{\theta^1_t}(x)||_2^2+\frac{\eta^2 C^2}{8}  \left( \nabla_{\theta^1} f_{\theta^1_t}(x) \right) ^T\nabla_{\theta^1}^2f_{\theta^1_t}(x)\left( \nabla_{\theta^1} f_{\theta^1_t}(x) \right)  
            \\
        =&-\frac{\eta C}{2}\Big( ||\nabla_{\theta^1} f_{\theta^1_t}(x)||_2^2
        -\frac{1}{4} \eta C \left( \nabla_{\theta^1} f_{\theta^1_t}(x) \right) ^T\nabla_{\theta^1}^2f_{\theta^1_t}(x)\left( \nabla_{\theta^1} f_{\theta^1_t}(x) \right) \Big). 
\end{align*}
Similar arguments can be made for model $f_{\theta^2}$, by assuming that $\mathcal L$ is symmetric, such that $\nabla \mathcal L(-x)=-C$. Now we get
\begin{align*}
    &f_{\theta^2_{t+1}}(x)-f_{\theta^2_t}(x) \approx 
    \\
    &\frac{\eta C}{2}\Big( ||\nabla_{\theta^2} f_{\theta^2_t}(x)||_2^2
    +\frac{1}{4} \eta C \left( \nabla_{\theta^2} f_{\theta^2_t}(x) \right) ^T\nabla_{\theta^2}^2f_{\theta^2_t}(x)\left( \nabla_{\theta^2} f_{\theta^2_t}(x) \right) \Big).
\end{align*}
Combining this, we can calculate the change in ensemble prediction to be
\begin{align*}
   \bar f_{\theta_{t+1}}-\bar f_{\theta_t} = 
   &\frac{\eta C}{4}\Big(\underbrace{ ||\nabla_{\theta^2} f_{\theta^2_t}(x)||_2^2 - ||\nabla_{\theta^1} f_{\theta^1_t}(x)||_2^2}_{\text{First-order difference in gradient}}
    \\
       +&\frac{1}{4} \eta C\underbrace{ \left(\left( \nabla_{\theta^1} f_{\theta^1_t}(x) \right) ^T\nabla_{\theta^1}^2f_{\theta^1_t}(x)\left( \nabla_{\theta^1} f_{\theta^1_t}(x) \right)\right)
   +\left(\left( \nabla_{\theta^2} f_{\theta^2_t}(x) \right) ^T\nabla_{\theta^2}^2f_{\theta^2_t}(x)\left( \nabla_{\theta^2} f_{\theta^2_t}(x) \right)\right)}_{\text{Second-order curvature}}
   \\
   &+ \mathcal{O}(\nabla^3f_{\theta_t}(x))\Big).
\end{align*}
To break down the result of the above derivation consider the example where $f_{\theta^1}$ has a more robust prediction i.e. has a lower squared gradient $f_{\theta^2}$ and both models have positive curvature. In addition say that $f_{\theta_t^1}(x)>f_{\theta_t^2}(x)$, so we get the loss gradient point towards $f_{\theta_t^1}(x)$ i.e. $C>0$. In this case the first order difference in gradients must be positive, and the ensemble prediction moves more towards $f_{\theta^1}(x)$.  

While we do see that increasing the curvature of $f_{\theta_t^1}(x)$ increases the ensemble prediction, but this effect is scaled by the gradient. This means that, in addition to it be a second order effect, it is also dampened by the gradient size. This means that if an ensemble member has robust predictions, i.e. large parameter spaces with low gradient size and therefore curvature, it will serve as attractors for the ensemble consensus.

\subsection{Experimental Validation} \label{appendix:curvature_appendix}
To investigate the effect of the ensemble consensus inducing more robust models we log the gradient norm and trace of the Hessian for different coupling strengths. We estimate the trace of the Hessian using Hutchinson's method~\citep{hutchinson1989stochastic}. Both quantities are estimated with 32000 (1000 batches). The results are shown in Figure \ref{appendix:curvature_coupling} and the model MAE in Figure \ref{appendix:MAE_coupling}.
\begin{figure}[H]
  \centering
  \includegraphics[width=0.49\columnwidth]{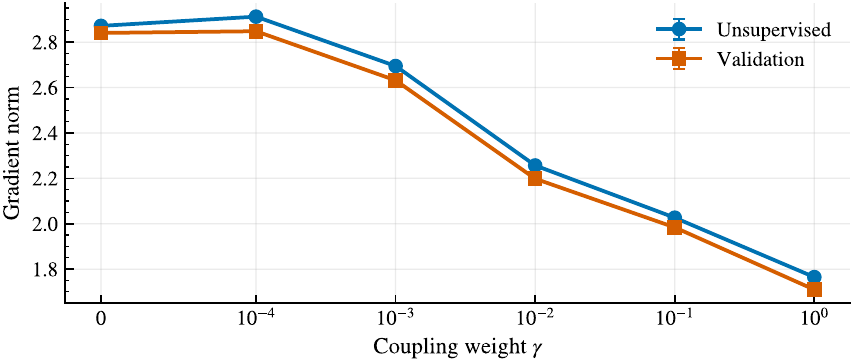}\includegraphics[width=0.49\columnwidth]{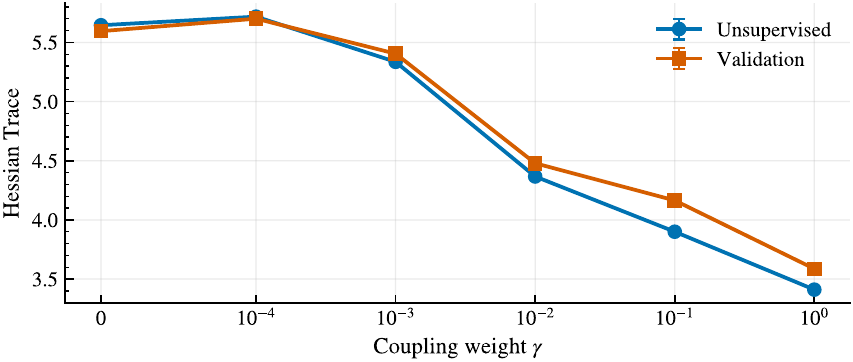}
  \caption{The gradient norm (left) and Hessian of the prediction (right) of the models of different data for different coupling strengths. Experimental setup follows the PaiNN setup section \ref{sec:qm9_mol} experiments on target ZPVE.}\label{appendix:curvature_coupling}
\end{figure}
From the experiments we see that the gradient norm and Hessian trace decreases with the coupling strength as predicted by the theory. This shows the model's predictions become more robust with the coupling strength, serving as a kind of regularisation. We hypothesise this to be the main reason for the models trained with ensemble consensus can improve over the corresponding deep ensemble. The ensemble consensus converges towards a more stable model instead of converging to a posterior mean. 
The regularisation from coupling can be too large, as seen in figure \ref{appendix:MAE_coupling}, where the optimal value is $10^{-3}$.
Interestingly, as the models are trained with consensus loss on the unsupervised data, we would expect the gradient norm and Hessian trace to be lower on unsupervised data than the unseen validation data. Instead they are roughly similar, suggesting the effect generalises to unseen data. This is further backed by the prediction error (MAE) being virtually identical for the unsupervised and validation data.
 
\begin{figure}[H]
  \centering
  \includegraphics[width=0.75\columnwidth]{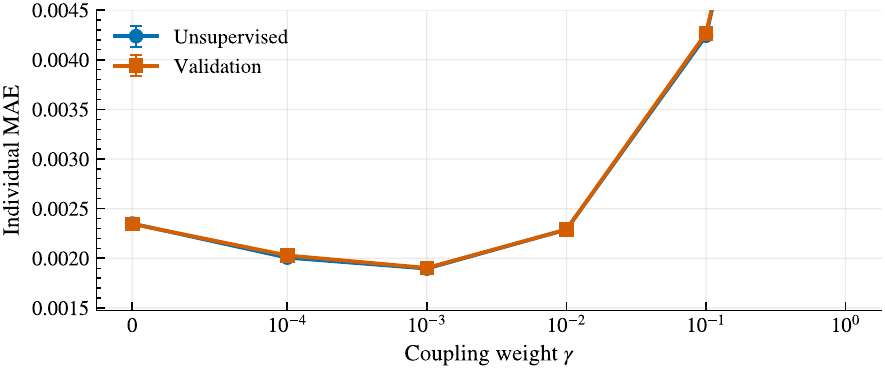}
  \caption{The MAE of the models for different coupling strengths. Experimental setup follows the PaiNN setup section \ref{sec:qm9_mol} experiments on target ZPVE.}\label{appendix:MAE_coupling}
\end{figure}

\section{Extended Studies}

\subsection{Scaling with Number of Ensemble Members} 
\label{appendix:scaling_ensemble_size}
We also investigate the predictive accuracy scaling with the number of ensemble members to larger than 4 sizes. Ensembles of these sizes were not feasible to do on any of the graph datasets, so we instead use the original computer vision version of CIFAR-10. This also validates that our method works for other domains than graphs. We use ResNet-18~\citep{resnet} with 5,000 labeled and 40,000 unlabeled data-points without any data augmentations. 
We performed an exhaustive hyperparameter sweep using a single seed over learning rate (0.1, 0.075, 0.05, 0.025, 0.01, 0.0075, 0.005, 0.0025, 0.001), and weight decay (0.01, 0.025, 0.05, 0.075, 0.1, 0.25, 0.5) for the purely supervised model. The number of epochs and learning rate annealing was fixed at a number informally found to work. The parameters of best performing model on validation accuracy at the last epoch was selected. The optimal values can be found in \ref{tab:cifar10-hyperparameter}. The coupling weight was fixed kept at $\gamma=1$.

The hyper-parameters can be found in Appendix~\ref{supl:cifar10}. 
\begin{table}[bp]
    \centering
    \caption{Predictive accuracy (\%) on CIFAR-10 validation, comparing Decoupled and Coupled models. The values represent mean $\pm$ 1.96 standard error of the mean.}
    \label{tab:cifar10-accuracy-size}
    \begin{tabular}{l r r r r}
    \toprule
    & \multicolumn{2}{c}{Individual Accuracy \%} & \multicolumn{2}{c}{Ensemble Accuracy (\%)} \\
    \cmidrule(lr){2-3} \cmidrule(lr){4-5}
    Ensemble size & \multicolumn{1}{c}{Decoupled} & \multicolumn{1}{c}{Coupled} & \multicolumn{1}{c}{Decoupled} & \multicolumn{1}{c}{Coupled} \\
    \midrule
    1 & $59.08 \err{1.35}$& \multicolumn{1}{c}{$\cdots$}& \multicolumn{1}{c}{$\cdots$}& \multicolumn{1}{c}{$\cdots$}\\
    2 & \multicolumn{1}{c}{$\vdots$}& $66.36 \err{0.45}$& $62.51 \err{0.40}$& $66.96 \err{0.47}$\\
    4 & \multicolumn{1}{c}{$\vdots$}& $67.24 \err{0.40}$& $64.65 \err{0.46}$& $67.92 \err{0.49}$\\
    8 & \multicolumn{1}{c}{$\vdots$}& $67.64 \err{0.35}$& $65.73 \err{0.51}$& $68.34 \err{0.34}$\\
    16 & \multicolumn{1}{c}{$\vdots$}& $67.75 \err{0.32}$& $66.41 \err{0.57}$& $68.54 \err{0.45}$\\
    32 & \multicolumn{1}{c}{$\vdots$}& $67.75 \err{0.30}$& $66.64 \err{0.37}$& $68.52 \err{0.35}$\\
    \bottomrule
    \end{tabular}
\end{table}
From the accuracy results in Table \ref{tab:cifar10-accuracy-size} and calibration scores in section \ref{appendix:callibration_cifar10}, we see a significant increase in accuracy and calibration scores going from a single model to a coupled ensemble with just two models. Interestingly, the individual prediction accuracy of a model trained in a coupled ensemble of two models outperforms the ensemble prediction from all decoupled ensemble sizes tested. This highlights the semi-supervised effect from using unlabeled data for training. Looking at the calibration metrics in Appendix~\ref{appendix:callibration_cifar10}, we see that the calibration results for the coupled ensemble are worse than the uncoupled one. 
This is often seen in self-supervised learning, as the "self-validating" training can result in worse calibration from confirmation bias~\citep{self-supervised-callibration, self-supervised-callibration2}. Surprisingly, we see the individual calibration improving over the decoupled model (i.e., a single model), and also improving as the number of ensemble members increases.

\begin{table}[h!]
\centering
\caption{GCN performance (MAE) for different ensemble sizes on peptides-struct. Comparison between Supervised and Supervised + SSL training. Results are reported as mean $\pm$ standard error.}
\label{tab:qm9_results_large_ensemble_sizes}
\begin{tabular}{c c c c c}
\toprule
& \multicolumn{2}{c}{Individual member} & \multicolumn{2}{c}{Ensemble} \\
\cmidrule(r){2-3} \cmidrule(l){4-5} 
\makecell{Size\\(M)} & \multicolumn{1}{c}{Supervised} & \makecell{Supervised \\ + SSL} & \multicolumn{1}{c}{Supervised} & \makecell{Supervised \\ + SSL} \\
\midrule
1  & $0.3005 \err{0.0064}$ & $\dots$& $\dots$& $\dots$\\
2  & $\vdots$& $0.2858\err{0.0057}$& $0.2958 \err{0.0036}$ & $0.2852 \err{0.0057}$\\
3  & $\vdots$& $0.2864 \err{0.0057}$& $0.2946 \err{0.0040}$ & $0.2860 \err{0.0054}$\\
4  & $\vdots$& $0.2870 \err{ 0.0037}$& $0.2926 \err{0.0057}$ & $0.2867 \err{0.0039}$\\
6  & $\vdots$& $0.2856 \err{ 0.0060}$& $0.2921 \err{0.0057}$ & $0.2852 \err{0.0056}$\\
8  & $\vdots$& $0.2874 \err{0.0045}$& $0.2916 \err{0.0057}$ & $0.2871 \err{0.0042}$\\
16 & $\vdots$& $0.2870 \err{0.0052}$& $0.2920 \err{ 0.0052}$& $0.2869 \err{0.0053}$\\
\bottomrule
\end{tabular}
\end{table}

\subsection{Calibration Metrics on CIFAR-10} \label{appendix:callibration_cifar10}
\begin{table}[H]
    \centering
    \caption{NLL on CIFAR-10, comparing decoupled and coupled models. The values represent mean $\pm$ 1.96 standard error of the mean.}
    \label{tab:cifar10-nll}
    \begin{tabular}{l rrrr}
    \toprule
    & \multicolumn{4}{c}{NLL $\downarrow$} \\
    \cmidrule(lr){2-5}
    & \multicolumn{2}{c}{Individual member} & \multicolumn{2}{c}{Ensemble} \\
    \cmidrule(lr){2-3} \cmidrule(lr){4-5}
    Ensemble size   & \multicolumn{1}{c}{Decoupled} & \multicolumn{1}{c}{Coupled} & \multicolumn{1}{c}{Decoupled} & \multicolumn{1}{c}{Coupled} \\
    \midrule
    1 & $1.543 \err{0.109}$ & \multicolumn{1}{c}{$\cdots$} & \multicolumn{1}{c}{$\cdots$} & \multicolumn{1}{c}{$\cdots$} \\
    2 & \multicolumn{1}{c}{$\vdots$} & $1.217 \err{0.021}$ & $1.267 \err{0.020}$ & $1.161 \err{0.021}$ \\
    4 & \multicolumn{1}{c}{$\vdots$} & $1.169 \err{0.019}$ & $1.121 \err{0.012}$ & $1.096 \err{0.019}$ \\
    8 & \multicolumn{1}{c}{$\vdots$} & $1.142 \err{0.017}$ & $1.048 \err{0.015}$ & $1.064 \err{0.016}$ \\
    16 & \multicolumn{1}{c}{$\vdots$} & $1.126 \err{0.015}$ & $1.007 \err{0.015}$ & $1.047 \err{0.014}$ \\
    32 & \multicolumn{1}{c}{$\vdots$} & $1.123 \err{0.019}$ & $0.990 \err{0.011}$ & $1.042 \err{0.018}$ \\
    \bottomrule
    \end{tabular}
\end{table}
\begin{table}[H]
    \centering
    \caption{AUC-ROC on CIFAR-10, comparing decoupled and coupled models. The values represent mean $\pm$ 1.96 standard error of the mean.}
    \label{tab:cifar10-auc-roc}
    \begin{tabular}{l rrrr}
    \toprule
    & \multicolumn{4}{c}{AUC-ROC $\uparrow$} \\
    \cmidrule(lr){2-5}
    & \multicolumn{2}{c}{Individual member} & \multicolumn{2}{c}{Ensemble} \\
    \cmidrule(lr){2-3} \cmidrule(lr){4-5}
    Ensemble size   & \multicolumn{1}{c}{Decoupled} & \multicolumn{1}{c}{Coupled} & \multicolumn{1}{c}{Decoupled} & \multicolumn{1}{c}{Coupled} \\
    \midrule
    1 & $.8885 \err{.0075}$ & \multicolumn{1}{c}{$\cdots$} & \multicolumn{1}{c}{$\cdots$} & \multicolumn{1}{c}{$\cdots$} \\
    2 & \multicolumn{1}{c}{$\vdots$} & $.9250 \err{.0020}$ & $.9125 \err{.0021}$ & $.9292 \err{.0020}$ \\
    4 & \multicolumn{1}{c}{$\vdots$} & $.9295 \err{.0019}$ & $.9266 \err{.0016}$ & $.9349 \err{.0019}$ \\
    8 & \multicolumn{1}{c}{$\vdots$} & $.9316 \err{.0019}$ & $.9336 \err{.0021}$ & $.9377 \err{.0018}$ \\
    16 & \multicolumn{1}{c}{$\vdots$} & $.9323 \err{.0016}$ & $.9384 \err{.0019}$ & $.9386 \err{.0015}$ \\
    32 & \multicolumn{1}{c}{$\vdots$} & $.9329 \err{.0019}$ & $.9409 \err{.0017}$ & $.9394 \err{.0019}$ \\
    \bottomrule
    \end{tabular}
\end{table}

\begin{table}[H]
    \centering
    \caption{ECE on CIFAR-10, comparing decoupled and coupled models. The values represent mean $\pm$ 1.96 standard error of the mean.}
    \label{tab:cifar10-ece}
    \begin{tabular}{l rrrr}
    \toprule
    & \multicolumn{4}{c}{ECE $\downarrow$} \\
    \cmidrule(lr){2-5}
    & \multicolumn{2}{c}{Individual member} & \multicolumn{2}{c}{Ensemble} \\
    \cmidrule(lr){2-3} \cmidrule(lr){4-5}
    Ensemble size   & \multicolumn{1}{c}{Decoupled} & \multicolumn{1}{c}{Coupled} & \multicolumn{1}{c}{Decoupled} & \multicolumn{1}{c}{Coupled} \\
    \midrule
    1 & $.2210 \err{.0357}$ & \multicolumn{1}{c}{$\cdots$} & \multicolumn{1}{c}{$\cdots$} & \multicolumn{1}{c}{$\cdots$} \\
    2 & \multicolumn{1}{c}{$\vdots$} & $.1713 \err{.0041}$ & $.1128 \err{.0057}$ & $.1512 \err{.0049}$ \\
    4 & \multicolumn{1}{c}{$\vdots$} & $.1609 \err{.0034}$ & $.0591 \err{.0043}$ & $.1369 \err{.0040}$ \\
    8 & \multicolumn{1}{c}{$\vdots$} & $.1548 \err{.0031}$ & $.0320 \err{.0043}$ & $.1301 \err{.0035}$ \\
    16 & \multicolumn{1}{c}{$\vdots$} & $.1494 \err{.0028}$ & $.0243 \err{.0033}$ & $.1235 \err{.0030}$ \\
    32 & \multicolumn{1}{c}{$\vdots$} & $.1485 \err{.0044}$ & $.0207 \err{.0043}$ & $.1226 \err{.0043}$ \\
    \bottomrule
    \end{tabular}
\end{table}

\begin{table}[H]
    \centering
    \caption{Brier score on CIFAR-10, comparing decoupled and coupled models. The values represent mean $\pm$ 1.96 standard error of the mean.}
    \label{tab:cifar10-brier}
    \begin{tabular}{l rrrr}
    \toprule
    & \multicolumn{4}{c}{Brier$\downarrow$} \\
    \cmidrule(lr){2-5}
    & \multicolumn{2}{c}{Individual member} & \multicolumn{2}{c}{Ensemble} \\
    \cmidrule(lr){2-3} \cmidrule(lr){4-5}
    Ensemble size   & \multicolumn{1}{c}{Decoupled} & \multicolumn{1}{c}{Coupled} & \multicolumn{1}{c}{Decoupled} & \multicolumn{1}{c}{Coupled} \\
    \midrule
    1 & $.4854 \err{.0271}$ & \multicolumn{1}{c}{$\cdots$} & \multicolumn{1}{c}{$\cdots$} & \multicolumn{1}{c}{$\cdots$} \\
    2 & \multicolumn{1}{c}{$\vdots$} & $.5594 \err{.0042}$ & $.4585 \err{.0041}$ & $.5530 \err{.0040}$ \\
    4 & \multicolumn{1}{c}{$\vdots$} & $.5654 \err{.0040}$ & $.4422 \err{.0047}$ & $.5572 \err{.0040}$ \\
    8 & \multicolumn{1}{c}{$\vdots$} & $.5676 \err{.0048}$ & $.4316 \err{.0050}$ & $.5588 \err{.0047}$ \\
    16 & \multicolumn{1}{c}{$\vdots$} & $.5652 \err{.0044}$ & $.4283 \err{.0049}$ & $.5563 \err{.0043}$ \\
    32 & \multicolumn{1}{c}{$\vdots$} & $.5649 \err{.0030}$ & $.4263 \err{.0033}$ & $.5558 \err{.0030}$ \\
    \bottomrule
    \end{tabular}
\end{table}
\subsection{Non-Chemical GNN+ datasets}\label{gnn_non_chemical_datasets}
Results for non-chemical GNN+ datasets are shown in Table~\ref{tab:gnn_non_chemical_datasets}.
Note the consensus and mean-teacher run for the GatedGCN models were not computed, as the models were too large to fit in memory.

\begin{table}[H]
{\centering
\setlength{\tabcolsep}{4pt}
\caption{Performance on non-molecule-related benchmarks, comparing supervised models with those using additional self-supervised learning (SSL). Results are shown for individual models (Individual) and the full ensemble (Ensemble). Results are the mean $\pm1.96$ standard error of the mean over 5 different seeds. * denotes models that did not fit in our 32GB video-memory.}
\label{tab:gnn_non_chemical_datasets}
\resizebox{\textwidth}{!}{%
\begin{tabular}{l l l c c c c c c}
\toprule
& & & \multicolumn{2}{c}{GCN} & \multicolumn{2}{c}{GIN} & \multicolumn{2}{c}{GatedGCN} \\
\cmidrule(lr){4-5} \cmidrule(lr){6-7} \cmidrule(lr){8-9}
Dataset & Training & Metric & Individual & Ensemble & Individual & Ensemble & Individual & Ensemble \\
\midrule
\multirow{4}{*}{CIFAR-10} & Supervised & \multirow{4}{*}{Acc (\%)$\uparrow$} & $50.44 \err{0.33}$& $55.38 \err{0.49}$& $50.46 \err{0.34}$& $53.90 \err{0.50}$& $57.69 \err{0.34}$& $61.23 \err{0.45}$\\
& Consensus& & $55.33 \err{0.31}$& $57.11 \err{0.42}$& $54.30 \err{0.36}$& $55.60 \err{0.31}$& *& \\
& Mean teacher& & $50.64 \err{0.28}$& & $50.99 \err{0.86}$& - & $57.80 \err{0.57}$& - \\
\midrule
\multirow{4}{*}{MNIST} & Supervised & \multirow{4}{*}{Acc (\%)$\uparrow$} & $96.61 \err{0.07}$& $96.97 \err{0.04}$& $96.26 \err{0.10}$& $96.73 \err{0.13}$& $96.96 \err{0.05}$&  $97.38 \err{0.11}$\\
& Consensus& & $96.82 \err{0.08}$& $96.93 \err{0.11}$& $96.68 \err{0.09}$& $96.82 \err{0.11}$& $97.48 \err{0.06}$& $97.57 \err{0.07}$\\
& Mean teacher& & $96.55 \err{0.06}$& - & $96.31 \err{0.11}$ & - & $96.84 \err{0.13}$& - \\
\midrule
\multirow{4}{*}{CLUSTER} & Supervised & \multirow{4}{*}{AP $\uparrow$} & $69.72 \err{0.98}$& $72.17 \err{0.12}$& $69.72 \err{0.98}$& $72.17 \err{0.12}$ & $69.72 \err{0.98}$& $76.84 \err{0.09}$\\
& Consensus& & $71.61 \err{0.31}$& $72.44 \err{0.19}$& $71.61 \err{0.31}$& $72.44 \err{0.19}$& *& \\
& Mean teacher& &  $69.53 \err{4.52}$& - &  $69.10 \err{1.50}$& - & $72.07 \err{7.91}$& - \\
\bottomrule
\end{tabular}
} 
}
\end{table}

\subsection{Different label amounts}
We also include an investigation into how the ensemble coupling is influenced by different number of labeled versus unlabeled data.
Specifically we vary the amount of training data that is labeled of the entire pool.
The results on QM9 target ZPVE (target 7) can be seen in Figure \ref{appendix:label_frac}. We observe a smooth improvement across all fractions without any collapse at low label fractions. Interestingly, at $50\%$ of labeled data, the individual models in coupled ensemble obtains the same error as the an individual model trained on the full data.
\begin{figure}[H]
  \centering
  \includegraphics[width=0.45\columnwidth]{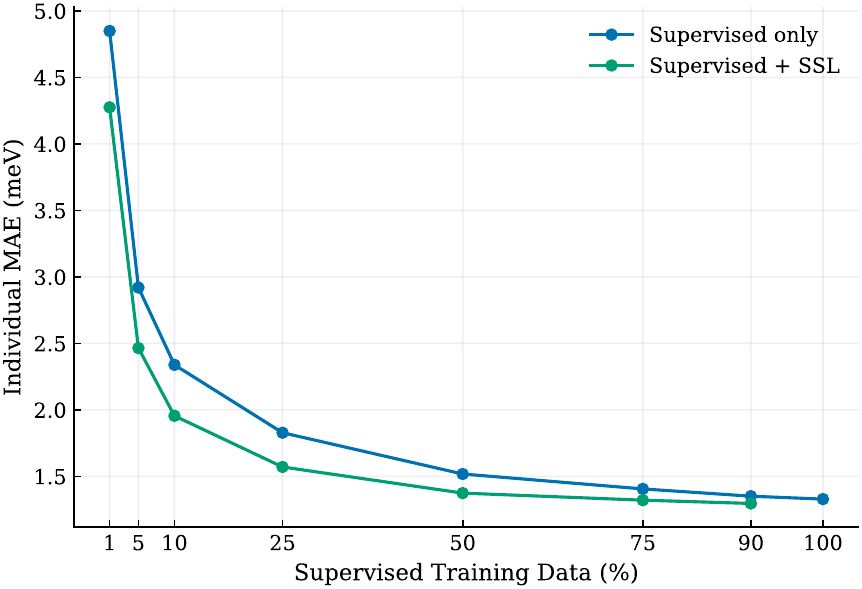}
  \caption{Validation MAE as a function of the amount of data labeled for QM9 target ZPVE (target 7). The results are averaged over 5 seeds.}
  \label{appendix:label_frac}
\end{figure}

\subsection{Variance Inside the Ensemble}\label{appendix:ensemble_variance}
We investigate the impact the coupling strength has on the diversity of the ensemble. Increasing the coupling strength should directly reduce the diversity, especially on the unlabeled data, which is also seen in Figure \ref{fig:ensemble_variance} (left) and \ref{appendix:ensemble_variance_train}. For completion, we also plot the variance within the ensemble over epochs in Figure \ref{appendix:ensemble_variance_epoch}. 

\begin{figure}[H]
  \centering
  \includegraphics[width=0.45\columnwidth]{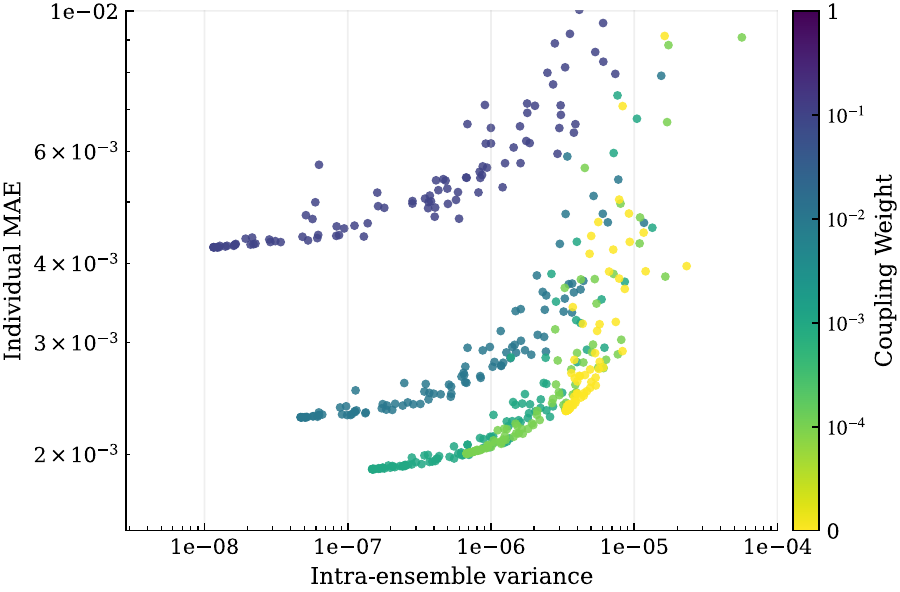}
  \includegraphics[width=0.45\columnwidth]{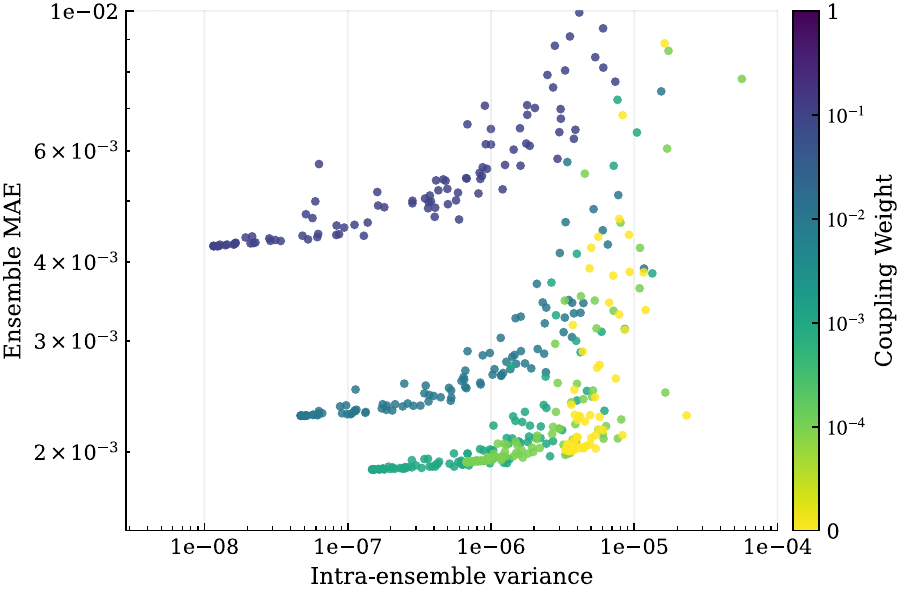}
  \caption{Error of the individual models (left) and ensemble (right) as a function of the variance within the ensemble across a training run. The data points are coloured based on the coupling strength on QM9 target 7.}\label{appendix:ensemble_variance_train}
\end{figure}

\begin{figure}[H]
  \centering
  \includegraphics[width=0.45\columnwidth]{figures/variance/ensemble_variance_epochs_QM9_hessian_logy_svg-tex.pdf}
  \includegraphics[width=0.45\columnwidth]{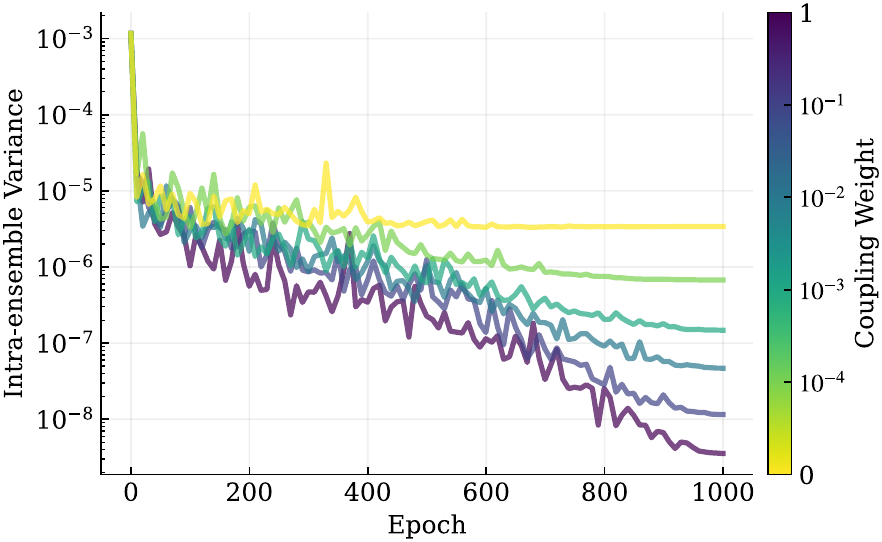}
  \caption{Variance within the ensemble over epochs for different coupling weights for validation data (left) and unsupervised data (right). The data points are coloured based on the coupling strength on QM9 target 7.}\label{appendix:ensemble_variance_epoch}
\end{figure}

\subsection{Distribution Scaffolding}\label{appendix:scaffolding}
We also investigate the behavior of ensemble consensus when the unlabeled data is from a different distribution. Specifically, we use scaffold splitting with Butina-clustering~\citep{DBLP:journals/jcisd/Butina99} for QM9. Specifically, we split QM9 into two distributions for the unsupervised data and one for the rest. We use the same hyperparameters as QM9. The results can be seen in Table \ref{tab:qm9_single_seed_butina}. For all targets we see an improvement from ensemble consensus, with an individual consensus model outperforming a uncoupled ensemble for all targets except $\langle R^2 \rangle$ and G. This suggests ensemble coupling is effective even when the unlabeled distribution is not too far away from the labeled distribution. Note, that the non-consensus runs for target 0 perform poorly, presumably due to the distributional shift being large enough to destabilise the runs.

\begin{table}[H]
\centering
\caption{PaiNN performance (MAE) on QM9 targets with Butina Scaffolding. Results are reported as mean $\pm1.96$ standard error of the mean over 5 seeds.}
\label{tab:qm9_single_seed_butina}
\begin{tabular}{l l r r r r}
\toprule
& & \multicolumn{2}{c}{Individual Member} & \multicolumn{2}{c}{Ensemble (M=4)} \\
\cmidrule(lr){3-4} \cmidrule(lr){5-6} 
Target & Unit & \multicolumn{1}{c}{Supervised} & \multicolumn{1}{c}{Supervised + SSL} & \multicolumn{1}{c}{Supervised} & \multicolumn{1}{c}{Supervised + SSL} \\
\midrule
  $\mu$ & D &  $.6168 \err{1.1741}$ &  $.0593 \err{.0038}$ & $.6106 \err{1.1740}$ &  $.05861 \err{.0038}$ \\
  $\alpha$ & $a_0^3$ & $.1496 \err{.0029}$ &  $.1244 \err{.0026}$ & $.1315 \err{.0021}$ &  $.1217 \err{.0026}$\\
  $\epsilon_{\text{HOMO}}$ & meV &  $79.635 \err{1.294}$ &  $74.641 \err{1.549}$ & $75.807 \err{1.416}$ &  $73.612 \err{1.537}$\\
  $\epsilon_{\text{LUMO}}$ & meV & $61.325 \err{.563}$ & $57.290 \err{.422}$ & $58.771 \err{.650}$ & $56.732 \err{.431}$ \\
  $\Delta\epsilon$ & meV & $126.920 \err{3.936}$ & $118.369 \err{2.575}$ & $121.950 \err{3.906}$ & $116.746 \err{2.229}$\\
  $\langle R^2 \rangle$ & $a_0^2$ &  $.8634 \err{ .0406}$ &  $.8337 \err{ .0197}$ & $.6675 \err{ .0378}$ &  $.6499 \err{.0201}$\\
  ZPVE & $a_0^2$ & $2.327 \err{.023}$ & $1.960 \err{.026}$ & $2.018 \err{.022}$ & $1.902 \err{.027}$\\
  $U_0$ & meV &  $23.813 \err{ .557}$ &  $19.524 \err{ .321}$ & $20.039 \err{ .499}$ & $18.678 \err{ .300}$\\
  $U$ & meV &  $23.994 \err{ .382}$ &  $19.787 \err{ .460}$ & $20.160 \err{ .366}$ &  $18.937 \err{ .435}$\\
 $H$ & meV & $24.021 \err{.476}$& $19.777 \err{.518}$& $20.191 \err{ .457}$&$18.934 \err{ .528}$\\
 $G$ & meV & $24.475 \err{0.527}$& $23.157 \err{1.833}$& $20.748 \err{.459}$ &$22.656 \err{1.987}$\\
 $C_v$ & $\frac{\text{cal}}{\text{mol K}}$& $.0558 \err{ .0005}$& $.0464 \err{.0006}$& $.0484 \err{ .0003}$&$.0449 \err{ .0007}$\\
 \bottomrule
\end{tabular}
\end{table}

\section{Hyperparameters}\label{Appendix:hyperparameters}
\subsection{QM9}\label{Appendix:QM9}
Our hyperparameter search for QM9 followed a two-step process. First, we started with baseline hyperparameters from a fully supervised setting and tuned the learning rate and weight decay for a single model on the 10\% labeled data subset. Second, using these optimized parameters, we then tuned the coupling weight ($\gamma$) for the size-4 ensemble by searching over $\{1.0, 0.1, 0.01, 0.001, 0.0001\}$.
The coupling weight swept for the mean-teacher was $\{0.9, 0.95, 0.99, 0.995, 0.999\}$.
Final a architectural and training configurations are detailed in Table \ref{tab:qm9-hyperparameter} and Table \ref{tab:qm9-hyperparameter_differentTargets}.

\subsubsection{PSEUD$\sigma$ Baseline Implementation}
\label{appendix:pseudo_baseline}

We re-implemented the Uncertainty-Aware Pseudo-labeling (PSEUD$\sigma$) method \citep{pmlr-v180-huang22a} for the PaiNN architecture, as the original study did not evaluate this model. To ensure a direct comparison, we utilized the identical 10\% labeled / 90\% unlabeled data split. Our implementation features a PaiNN backbone equipped with an evidential head to output the required prior parameters ($\gamma, v, \alpha, \beta$). We trained using AdamW (1e-4 LR, 1e-4 WD) with a batch size of 32. The training schedule consisted of a 1000-epoch initial training phase on the labeled data, followed by 15 outer-loop episodes ($M$) of 100 inner-loop epochs ($K$) each. We adopted the original paper's recommended low-data hyperparameters, including an evidential regularization coefficient ($\lambda$) of 0.5 and epistemic uncertainty for adaptive weighting. Consistent with the PSEUD$\sigma$ strategy, our cosine annealing learning rate scheduler was re-initialized at the start of each of the 15 episodes.

\begin{table}[H]
\centering
\caption{Hyperparameter Configuration for QM9. These are fixed across all targets.}
\label{tab:qm9-hyperparameter}
\setlength{\tabcolsep}{12pt} 
\begin{tabular}{@{}ll@{}}
\toprule
\textbf{Hyperparameter} & \textbf{Value} \\
\midrule
\multicolumn{2}{@{}l}{\textbf{Training}} \\
\cmidrule(l){1-2}
Batch size & 32 \\
Epochs & 1000 \\
Optimizer & AdamW \\
Scheduler & Cosine annealing \\
\midrule
\multicolumn{2}{@{}l}{\textbf{Coupling}} \\
\cmidrule(l){1-2}
Unsupervised loss criterion & L2 \\
\bottomrule
\end{tabular}
\end{table}

\begin{table}[H]
\centering
\caption{Additional hyperparameter Configuration for QM9 for different targets.}
\label{tab:qm9-hyperparameter_differentTargets}
\setlength{\tabcolsep}{12pt} 
\begin{tabular}{l c c cl}
\toprule
Target & Learning rate& Weight decay& Coupling weight&Mean teacher decay\\
\midrule
$\mu$ & 1e-3 & 1e-3 & 0.1  &0.995\\
$\alpha$ & 1e-4 & 1e-3 & 0.1 &0.99\\
$\epsilon_{\text{HOMO}}$ & 1e-3 & 0& 0.01  &0.95\\
$\epsilon_{\text{LUMO}}$ & 5e-4 & 1e-6& 0.01 &0.9\\
$\Delta\epsilon$ & 1e-3 & 0 & 0.01 &0.99\\
$\langle R^2 \rangle$ & 5e-4 & 1e-4& 0.1 &0.99\\
ZPVE & 5e-4 & 1e-5 & 0.001 &0.99\\
$U_0$ & 1e-4 & 1e-4 & 0.01 &0.99\\
$U$ & 1e-4 & 0 & 0.01 &0.9\\
$H$ & 1e-4 & 1e-4 & 0.01 &0.9\\
$G$ & 1e-4 & 1e-5 & 0.01 &0.995\\
$C_v$ & 1e-4 & 1e-5 & 0.01 &0.995\\
\bottomrule
\end{tabular}
\end{table}

\subsection{GNN+ Datasets} \label{GNN_plus_lr}
We keep the hyperparameters for the different datasets and models the same as in the original paper, except for the number of epochs, weight decay, and learning rate. As we are training with $10\%$ of the original data, we double the number of epochs to mitigate the fewer parameter updates. We then made a two-step hyper-parameter sweep; initially the learning rate using original weight decay values, and afterwards the weight decay using the found best learning rates. The learning rates investigated were (0.25, 0.5, 1.0, 2.0, 4.0) times the original learning rate value for that model and dataset. The weight decays investigated was ($10^{-6}$, $10^{-5}$, $10^{-4}$, $10^{-3}$, $10^{-2}$, $10^{-1}$, $0$). We could not simply multiply the weight decay values by a fixed factor, as some of the original weight decay values were 0. These sweeps were performed for a single uncoupled model following the same tuning procedure as in the original paper. Notably, this means that the predictive accuracy report from each run is the best validation performance seen during any of the epochs. The found learning rates are listed in Table~\ref{tab:gnn_tuned_learning_rates_by_model}, and weight decays Table~\ref{tab:gnn_weight_decays_by_model} below. The train, validation, and test splits follow the same procedure as \cite{gnn_plus}. Each seed shuffles the labeled and unlabeled part of the training data.

The SSL parameters were selected based on the best performing values on the validation score on ZINC. The mean-teacher values investigated was $(0.9, 0.99, 0.995, 0.999)$, and the coupling weight for the consensus and pair-wise methods were $(0.25, 0.5, 0.75, 1, 1.25, 1.5, 1.75, 2.0)$. The optimal value of mean-teacher was found to be 0.999, and coupling weight for the consensus learning was 1.0, and the pairwise loss was tied between 0.5 and 0.75, so we went with 0.5 based on the recommendations in \cite{n-CPS}.
\begin{table}[H]
\centering
\caption{Tuned learning rates for GNN models across datasets.}
\label{tab:gnn_tuned_learning_rates_by_model}
\begin{tabular}{l c c c}
\toprule
Dataset & GCN & GIN & GatedGCN \\
\midrule
CIFAR-10 & 0.002 & 0.0005 & 0.001 \\
CLUSTER & 0.0005& 0.0005& 0.002\\
ogbg-molhiv & 0.0001 & 0.00005& 0.0004 \\
MNIST & 0.001 & 0.002 & 0.001 \\
ogbg-molpcba & 0.000125 & 0.000125 & 0.00025 \\
peptides-func & 0.0005 & 0.002 & 0.002 \\
peptides-struct & 0.002 & 0.0005 & 0.002 \\
ogbg-ppa & 0.0006 & 0.0012 & 0.0003\\
ZINC & 0.004 & 0.001 & 0.004 \\
\bottomrule
\end{tabular}
\end{table}

\begin{table}[H]
\centering
\caption{Tuned weight decays for GNN models across datasets.}
\label{tab:gnn_weight_decays_by_model}
\begin{tabular}{l c c c}
\toprule
Dataset & GCN & GIN & GatedGCN \\
\midrule
CIFAR-10 & $10^{-2}$& $10^{-1}$& $10^{-2}$\\
CLUSTER & 0& $10^{-1}$& $10^{-6}$\\
ogbg-molhiv & $10^{-3}$& $10^{-1}$& $10^{-5}$\\
MNIST & $10^{-1}$& $10^{-2}$& $10^{-5}$\\
ogbg-molpcba & $10^{-1}$& $10^{-2}$& $10^{-5}$\\
peptides-func & 0& $10^{-1}$& $10^{-3}$\\
peptides-struct & $10^{-3}$& $10^{-5}$& $10^{-1}$\\
ogbg-ppa & $10^{-1}$& $10^{-1}$& $10^{-2}$\\
ZINC & $10^{-1}$& $10^{-5}$& $10^{-3}$\\
\bottomrule
\end{tabular}
\end{table}

\subsection{CIFAR-10}\label{supl:cifar10}
The hyperparameter configurations for CIFAR-10 are shown in Table~\ref{tab:cifar10-hyperparameter}.

\begin{table}[H]
\centering
\caption{Hyperparameter Configuration for CIFAR-10.}
\label{tab:cifar10-hyperparameter}
\setlength{\tabcolsep}{12pt} 
\begin{tabular}{@{}ll@{}}
\toprule
\textbf{Hyperparameter} & \textbf{Value} \\
\midrule
\multicolumn{2}{@{}l}{\textbf{Learning Rate}} \\
\cmidrule(l){1-2}
Learning rate & $0.005$ \\
Annealing method & Step \\
Step size & 1 \\
Learning rate reduction & $0.975$ \\
\midrule
\multicolumn{2}{@{}l}{\textbf{Regularization}} \\
\cmidrule(l){1-2}
L2 Weight Decay & $0.075$ \\
\midrule
\multicolumn{2}{@{}l}{\textbf{Optimizer}} \\
\cmidrule(l){1-2}
Optimizer & SGD \\
Momentum & 0.9 \\
\midrule
\multicolumn{2}{@{}l}{\textbf{Training}} \\
\cmidrule(l){1-2}
Epochs & 250 \\
\midrule
\multicolumn{2}{@{}l}{\textbf{Loss Function}} \\
\cmidrule(l){1-2}
Coupled loss weighting & 1.0 \\
Ensemble coupled loss & KL-divergence \\
Supervised loss & Cross-entropy \\
\bottomrule
\end{tabular}
\end{table}
\section{Calibration Scores for the ogbg-molhiv} \label{appendix:calibration_hiv}
We also investigate the calibration on the ogbg-molhiv benchmark. We do not investigate the datasets ogbg-pcba and peptides functional due to the  to the large skewing of classes and missing values. The results are included in Table \ref{tab:individual_performance_hiv} and Table~\ref{tab:ensemble_performance_hiv}. We see across different architectures that the coupling of the ensemble improves the calibration scores, especially NLL. One notable exception is the MCE score for the GIN ensemble model, where the coupled ensemble becomes significantly worse.
\begin{table}[H]
\centering
\caption{Individual Performance on the ogbg-molhiv dataset}
\label{tab:individual_performance_hiv}
{
\begin{tabular}{l rrrrrr}
\toprule
& \multicolumn{2}{c}{GCN} & \multicolumn{2}{c}{GIN} & \multicolumn{2}{c}{GatedGCN} \\
\cmidrule(lr){2-3} \cmidrule(lr){4-5} \cmidrule(lr){6-7}
Metric & \multicolumn{1}{c}{Decoupled}& \multicolumn{1}{c}{Coupled}& \multicolumn{1}{c}{Decoupled}& \multicolumn{1}{c}{Coupled}& \multicolumn{1}{c}{Decoupled}& \multicolumn{1}{c}{Coupled}\\
\midrule
Accuracy $\uparrow$& $95.78 \err{0.38}$ & $96.18 \err{0.48}$ & $95.97 \err{0.68}$ & $96.30 \err{0.35}$ & $95.66 \err{0.72}$ & $96.01 \err{0.57}$ \\
ROC-AUC $\uparrow$& $.721 \err{.0193}$ & $.731 \err{.0218}$ & $.733 \err{.017}$ & $.734 \err{.015}$ & $.731 \err{.008}$ & $.736 \err{.007}$ \\
NLL $\downarrow$& $.375 \err{.185}$ & $.230 \err{.0662}$ & $.147 \err{.015}$ & $.140 \err{.012}$ & $.200 \err{.033}$ & $.180 \err{.023}$ \\
ECE $\downarrow$& $.0312 \err{.0092}$ & $.0246 \err{.0039}$ & $.0113 \err{.0045}$ & $.0105 \err{.0048}$ & $.0232 \err{.0069}$ & $.0201 \err{.0049}$ \\
MCE $\downarrow$& $.2041 \err{.0994}$ & $.2058 \err{.0763}$ & $.1113 \err{.0620}$ & $.1058 \err{.0246}$ & $.1154 \err{.0399}$ & $.0985 \err{.0287}$ \\
\bottomrule
\end{tabular}
}
\end{table}

\begin{table}[H]
\centering
\caption{Ensemble Performance on the ogbg-molhiv dataset}
\label{tab:ensemble_performance_hiv}
\begin{tabular}{l rrrrrr}
\toprule
& \multicolumn{2}{c}{GCN} & \multicolumn{2}{c}{GIN} & \multicolumn{2}{c}{GatedGCN} \\
\cmidrule(lr){2-3} \cmidrule(lr){4-5} \cmidrule(lr){6-7}
Metric & \multicolumn{1}{c}{Decoupled}& \multicolumn{1}{c}{Coupled}& \multicolumn{1}{c}{Decoupled}& \multicolumn{1}{c}{Coupled}& \multicolumn{1}{c}{Decoupled}& \multicolumn{1}{c}{Coupled}\\
\midrule
Accuracy $\uparrow$& $96.66 \err{0.33}$ & $96.60 \err{0.20}$ & $96.11 \err{0.66}$ & $96.39 \err{0.33}$ & $96.03 \err{0.62}$ & $96.12 \err{0.56}$ \\
ROC-AUC $\uparrow$& $.7350 \err{.0228}$ & $.7357 \err{.0212}$ & $.7346 \err{.0165}$ & $.7347 \err{.0153}$ & $.7341 \err{.0107}$ & $.7383 \err{.0073}$ \\
NLL $\downarrow$& $.2437 \err{.1051}$ & $.1760 \err{.0275}$ & $.1432 \err{.0130}$ & $.1383 \err{.0108}$ & $.1821 \err{.0249}$ & $.1729 \err{.0208}$ \\
ECE $\downarrow$& $.0261 \err{.0057}$ & $.0224 \err{.0046}$ & $.0121 \err{.0039}$ & $.0109 \err{.0037}$ & $.0201 \err{.0051}$ & $.0193 \err{.0045}$ \\
MCE $\downarrow$& $.2587 \err{.0793}$ & $.2617 \err{.0564}$ & $.1585 \err{.0760}$ & $.1933 \err{.0852}$ & $.1566 \err{.0576}$ & $.1533 \err{.0251}$ \\
\bottomrule
\end{tabular}
\end{table}

\section{Ablation Studies}

\subsection{Soft or Hard labels for Classification}
Often semi-supervised methods use some form of "hard-labeling" as the consistency target. Usually, this is implemented as setting the ensemble target for an unlabeled datapoint to be the most likely label, as predicted by the individual model \citep{n-CPS,mean-teacher} or the ensemble \citep{agreement_ensemble}. This removes the underlying uncertainty information of the estimates, and risking drastically reducing the calibration of the model by making it overconfident. The motivation for using hard-labeling is the assumption of label smoothness, as it forces the model to pick the same label for data points close together. We investigate this assumption in table \ref{tab:cifar10_graph_calibration}. The results on accuracy show that hard-labelling slightly benefits the accuracy, it comes at the cost of worse calibration metrics such as ECE and MCE for the individual models. One explanation for the small increase in accuracy is the label-smoothens assumption can be violated for graphs and especially molecules.

\begin{table}[ht]
\centering
\caption{Calibration metrics on graph CIFAR-10.}
\label{tab:cifar10_graph_calibration}
\begin{tabular}{lrrrr}
\toprule
& \multicolumn{2}{c}{Individual} & \multicolumn{2}{c}{Ensemble} \\
\cmidrule(lr){2-3} \cmidrule(lr){4-5}
Metric & \multicolumn{1}{c}{Mean}& \multicolumn{1}{c}{Hard Label}& \multicolumn{1}{c}{Mean}& \multicolumn{1}{c}{Hard Label}\\
\midrule
Accuracy (\%)$\uparrow$& $56.0220 \err{0.2233}$ & $56.2020 \err{0.5595}$ & $56.7640 \err{0.2742}$ & $57.1920 \err{0.4124}$ \\
ROC $\uparrow$& $.9040 \err{.0017}$ & $.8936 \err{.0025}$ & $.7598 \err{.0015}$ & $.7621 \err{.0022}$ \\
F1 $\uparrow$& $.5586 \err{.0021}$ & $.5607 \err{.0051}$ & $.5661 \err{.0023}$ & $.5706 \err{.0034}$ \\
ECE $\downarrow$& $.1514 \err{.0030}$ & $.3034 \err{.0052}$ & $.4324 \err{.0027}$ & $.4281 \err{.0041}$ \\
MCE $\downarrow$& $.2307 \err{.0030}$ & $.4252 \err{.0141}$ & $.4324 \err{.0027}$ & $.4281 \err{.0041}$ \\
\bottomrule
\end{tabular}
\end{table}

\subsection{Pairwise or Coupled Ensemble} \label{appendix:pairwise}
There is a strong theoretical connection between the pairwise loss between ensemble members used in n-CPS~\citep{n-CPS} and the coupled ensemble loss presented in this work.
The pairwise loss is defined as 
\begin{align*}
    \mathcal{L}_{\text{pairwise}}\big(f_{\theta_i}(x)) &= \frac{1}{M-1}\sum_{m=1}^M\mathcal{L}\big(f_{\theta_i}(x)-f_{\theta_m}(x)\big).
\end{align*}

For a convex loss $\mathcal{L}$ that can be written on the form $\mathcal L(x-y)$, then Jensen's inequality yields
\begin{align*}
    \mathcal{L}\big(f_{\theta_i}(x)-\mathbb{E}_m[f_{\theta_m}(x)]\big) &= \mathcal{L}\big(\mathbb{E}_m[f_{\theta_i}(x)-f_{\theta_m}(x)]\big)
    \\
    &\leq \mathbb{E}_m[\mathcal{L}\big(f_{\theta_i}(x)-f_{\theta_m}(x)\big)
    \\
    &= \frac{1}{M}\sum_{m=1}^M\mathcal{L}\big(f_{\theta_i}(x)-f_{\theta_m}(x)\big)
     \\
    &\leq \frac{1}{M-1}\sum_{m=1}^M\mathcal{L}\big(f_{\theta_i}(x)-f_{\theta_m}(x)\big).
\end{align*}
As $f_{\theta_i}(x)-f_{\theta_m}(x)=0$ if $i=m$ this upper bound is exactly the n-CPS loss. In general this upper bound is not tight, but if $M=2$ and  $\mathcal{L}$ is of the form $(x-y)^l$, e.g. the $l_1$ or $l_2$-loss we get
\begin{align*}
    \mathcal{L}(f_{\theta_1} - \mathbb{E}_m[f_{\theta_m}(x)]) &=  \Big(f_{\theta_1} - \frac{f_{\theta_1} + f_{\theta_2}}{2}\Big)^l
    \\
    &=  \frac{1}{{2^l}}(f_{\theta_1} - f_{\theta_2})^l.
\end{align*}
We see that the two losses are equal up to a scaling factor that disappears if we tune the coupling weight.

\subsection{Robustness of Coupled Weighting}
To investigate the robustness of the coupled weighting $\gamma$, we followed the same experimental setup on CIFAR-10 with a Resnet18 model. The results can be seen in Figure~\ref{appendix:robust_coupling}. From the figure, we see that the validation accuracy is somewhat flat as soon as $\gamma>1$, but there is a small optimum around $\gamma=6$. This illustrates that at least for CIFAR-10, the choice of $\gamma$ is robust.
\begin{figure}[H]
  \centering
  \includegraphics[width=0.45\columnwidth]{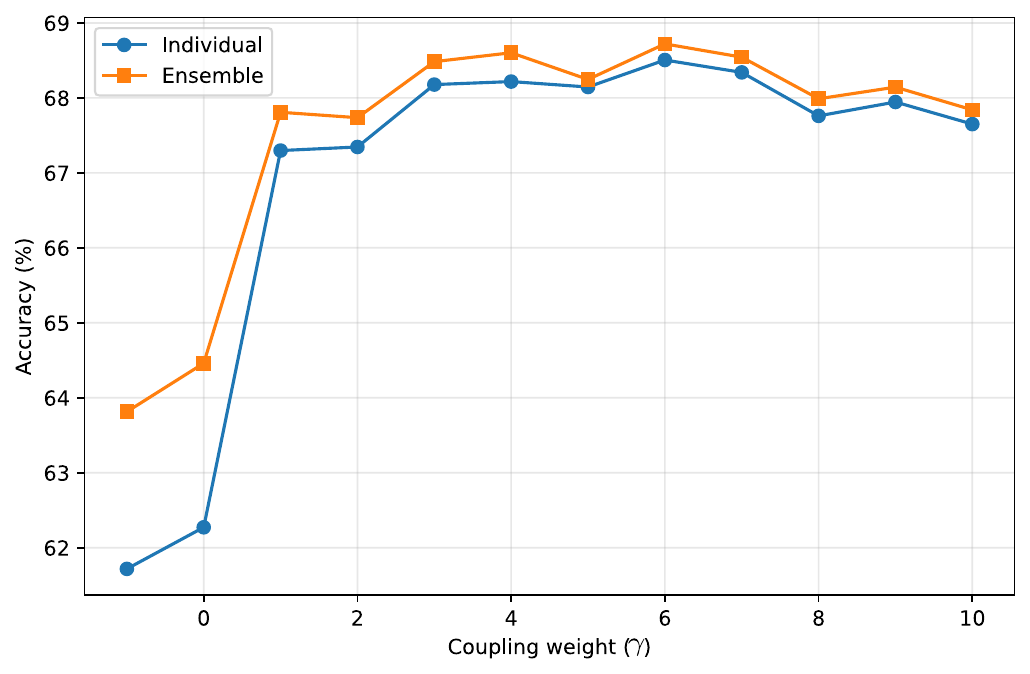}
  \caption{Validation accuracy as a function of the weighting of the ensemble consistency loss.}\label{appendix:robust_coupling}
\end{figure}

\subsection{How to Schedule the coupled loss }
\label{appendix:couplingStrategies}
Initially, during training, the members of the ensemble models only have weak prediction strength. This results in the ensemble prediction serving only as a weak signal guiding the models. Intuitively, this suggests that the weighting of the coupled loss should be added or increased as training progresses.
We investigate if this is the case in the same CIFAR-10 setting. We let the ensemble coupling weighting be a linear function of the number of epochs, and vary the starting value and slope of the ensemble coupling weighting. The results can be seen in Figure~\ref{appendix:clipped_scheduled_coupling}, where negative coupling weights are clipped to 0, while Figure~\ref{appendix:negative_scheduled_coupling} shows the un-clipped results (in the relevant area). From Figure~\ref{appendix:clipped_scheduled_coupling}, we see that for CIFAR-10, there is no large benefit to begin coupling later compared to selecting a good constant coupling value. Note that a delayed start corresponds to a negative start value and a positive increase pr. epoch, as an initial coupling of -1 and a pr. epoch increase of 0.1 means it starts at epoch 10, due to clipping. 
\begin{figure}[H]
  \centering
  \includegraphics[width=0.45\columnwidth]{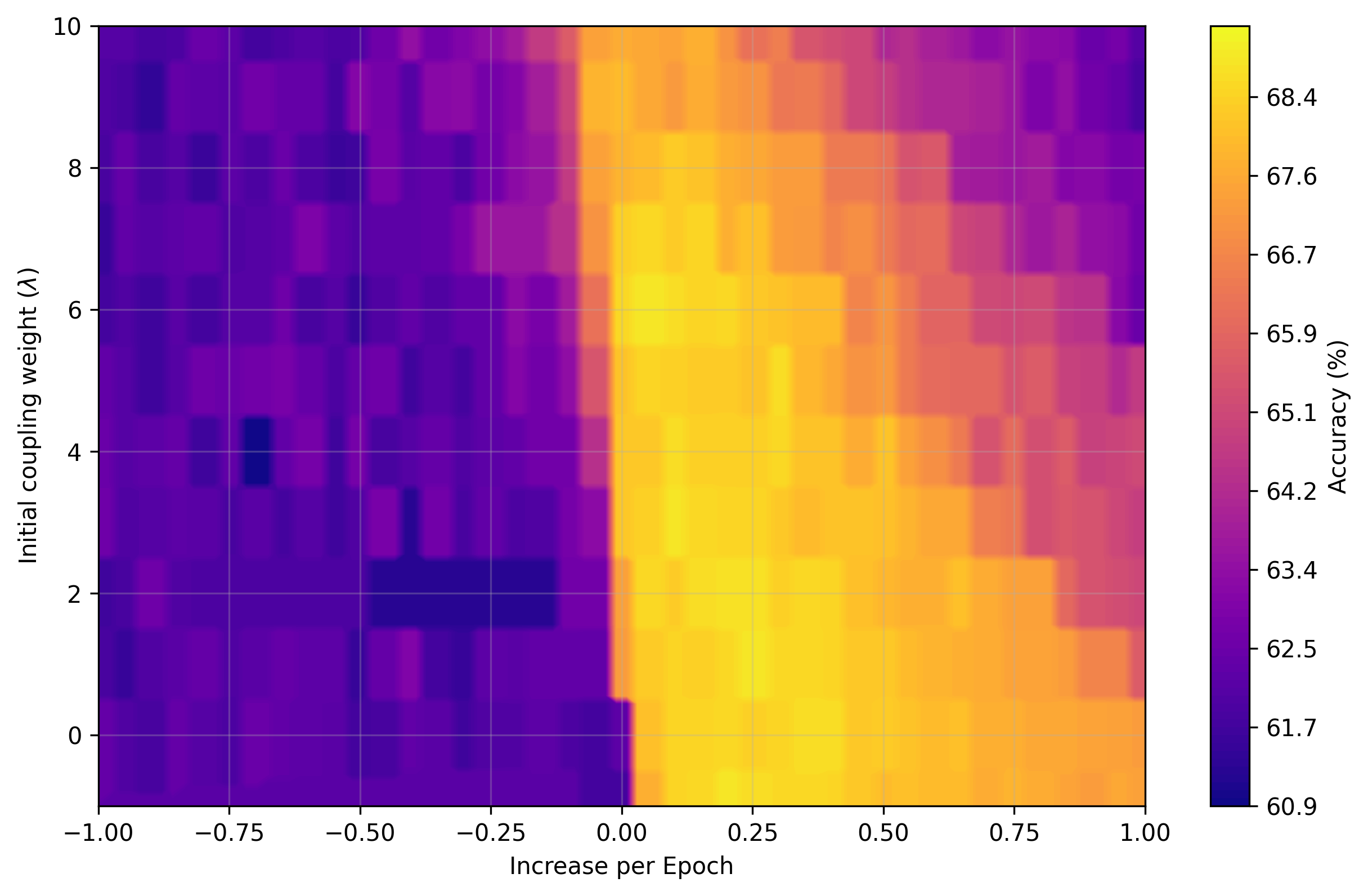}
  \includegraphics[width=0.45\columnwidth]{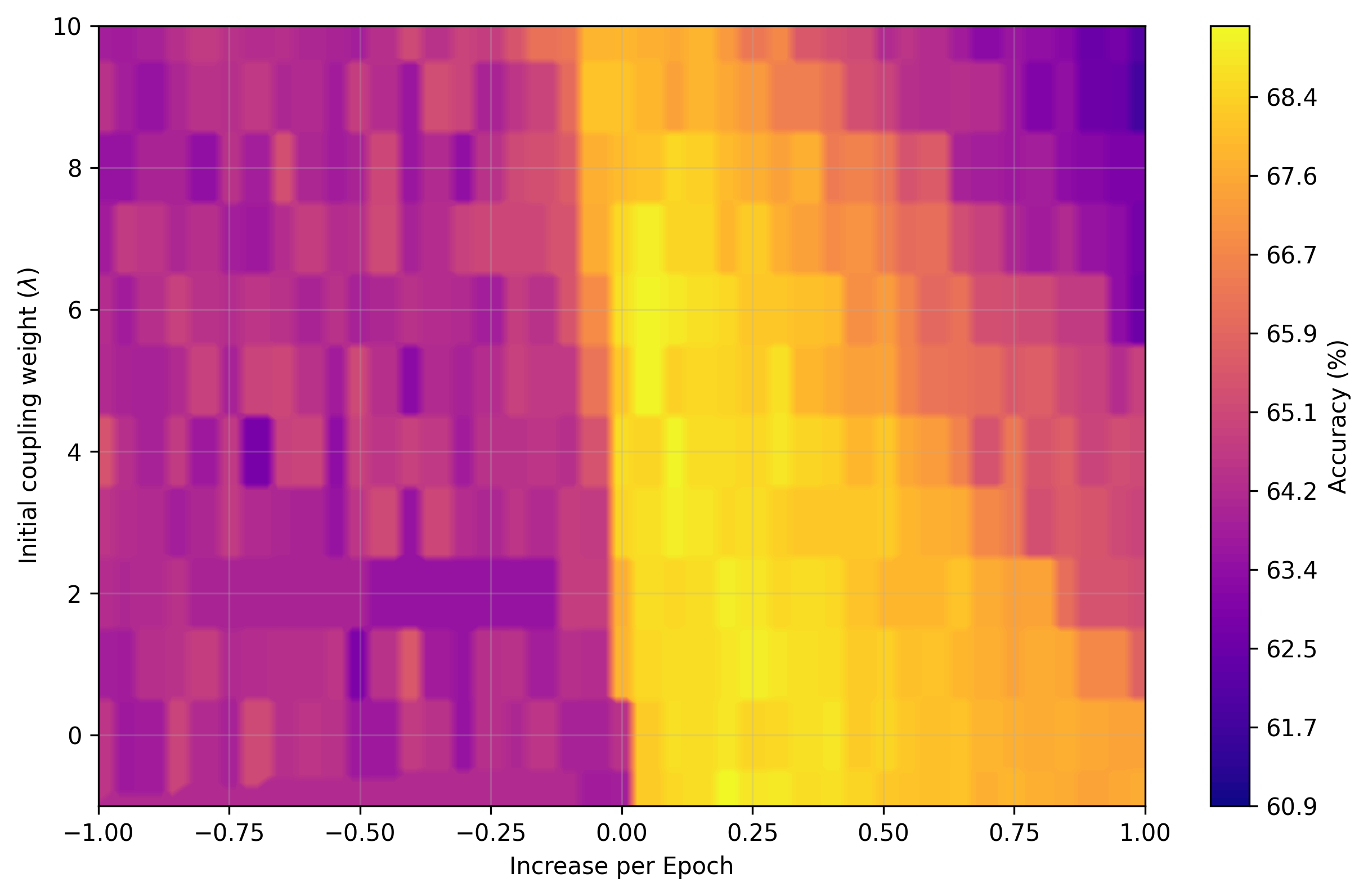}
  \caption{Validation accuracy as a function of the initial coupling weight and the increase in coupling weights per epoch for an individual model (left) and a coupled ensemble with two members (right). The results are averaged over 3 seeds.}
  \label{appendix:clipped_scheduled_coupling}
\end{figure}

\begin{figure}[H]
  \centering
  \includegraphics[width=0.45\columnwidth]{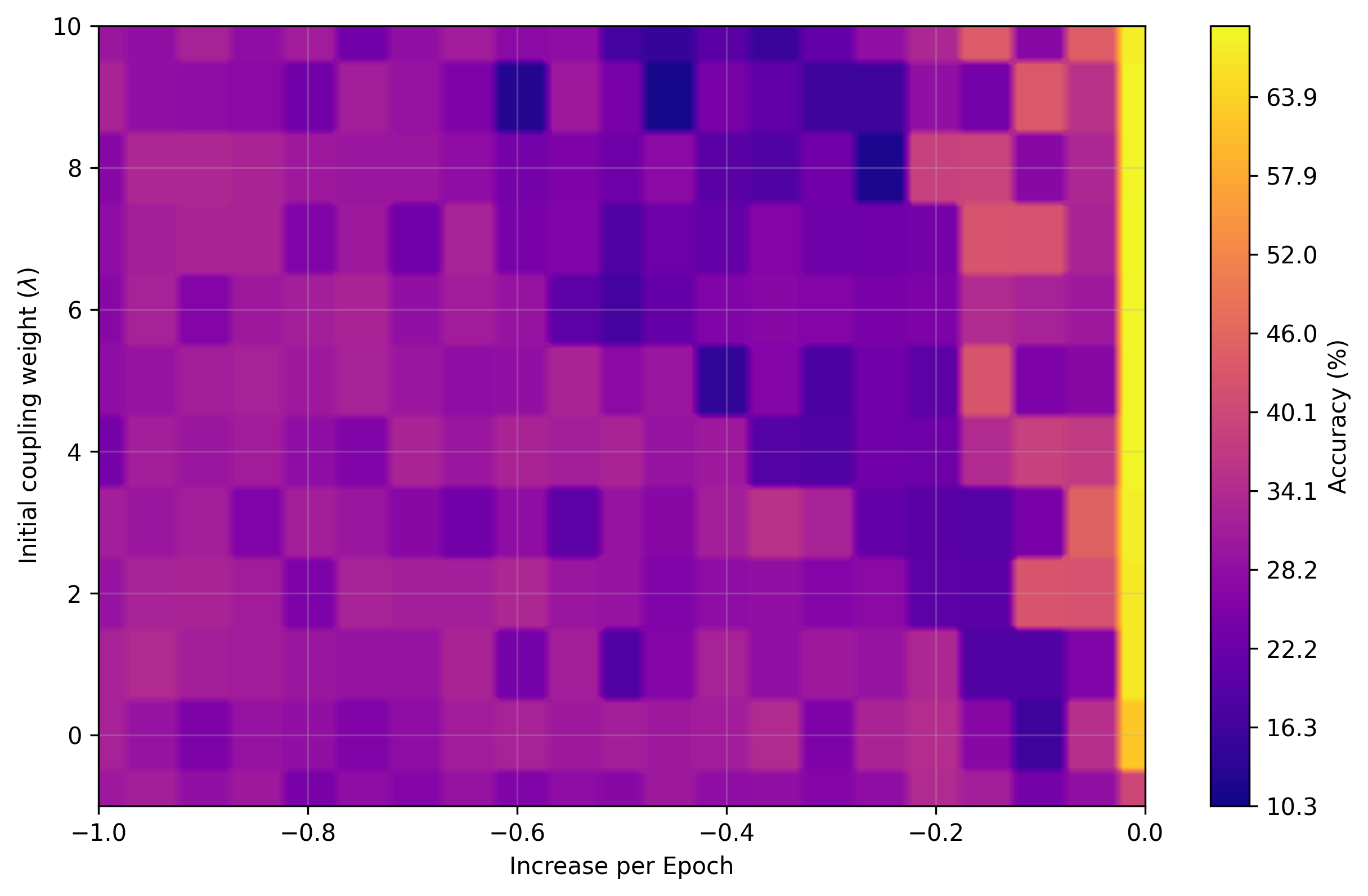}
  \includegraphics[width=0.45\columnwidth]{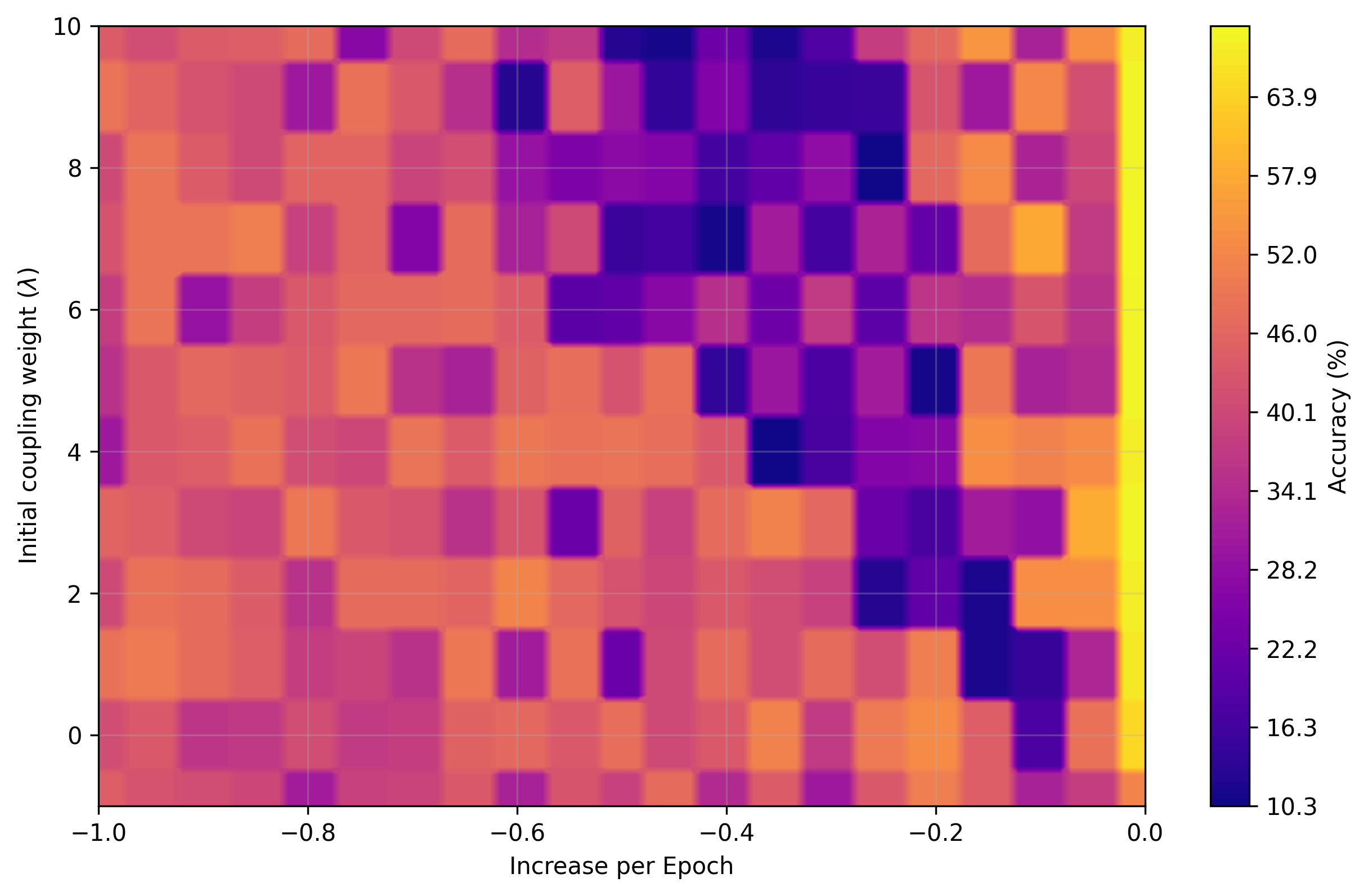}
  \caption{Validation accuracy as a function of the weighting of the ensemble consistency loss.}\label{appendix:negative_scheduled_coupling}
\end{figure}

\subsection{Different Losses}
We also investigated the sensitivity to different formulations of the ensemble consistency loss. The results are shown in Table~\ref{tab:losses}. We ran with the same setup for the computer vision CIFAR-10 and two ensemble members. While the best performing loss function was KL-divergence (the same form as the supervised loss), the "regression" functions  $(L_1,L_2,L_\infty)$ performed about the same. Only the reversed KL-divergence, $D_{KL}(E||I)$, resulted in lower accuracy, at around the same level as a decoupled model (see Table~\ref{tab:cifar10-accuracy-size}).
\begin{table}[H]
    \centering
    \caption{Validation accuracy with different ensemble consistency loss functions. Results averaged over 10 seeds. Here, $I$ is the individual prediction and $E$ is the ensemble consensus.}
    \label{tab:losses}
    \begin{tabular}{lr}
        \toprule
        Ensemble Loss & \multicolumn{1}{c}{Individual Accuracy} \\
        \midrule
        $L_\infty$ & $66.23 \err{0.29}$ \\
        $D_{KL}(I||E)$ & $66.62 \err{0.51}$ \\
        $D_{KL}(E||I)$ & $59.37 \err{0.78}$ \\
        $L_1$ & $66.01 \err{0.51}$ \\
        $L_2$ & $66.12 \err{0.45}$ \\
        \bottomrule
    \end{tabular}
\end{table}

\subsection{Different coupling strategies} \label{Appendix:coupling_strategy}
We investigated different strategies for coupling the unsupervised loss on QM9. This includes various combinations of three parameters: the \textit{coupling weight}, the \textit{coupling start} and the \textit{coupling schedule}. 

\paragraph{Coupling weight} The coupling weight parameter defines how much the unsupervised loss should contribute to the total loss. When set to 0, only the supervised loss will be taken into account. 

\paragraph{Coupling start} The coupling start refers to when the unsupervised loss in included during training, i.e. for the first $x$\% of epochs, the model is only trained on the labeled data and only afterwards, the unsupervised loss with be included via coupling. Depending on the dataset and task, it intuitively can make sense to first let the model learn a little bit before evaluating the loss on unlabeled data. Specifically, in regression tasks this can be the case, since the model output is not bounded, as opposed to classification tasks. When set to 0, coupling will be used through the whole training. This parameter is given in percentage, i.e. percentage of total training epochs after which the coupling should start.

\paragraph{Coupling schedule} Three different coupling schedules were tested: \textit{constant}, \textit{increase} and \textit{bell}. \textit{Constant} refers to the the coupling weight being constant from onset until the end of training. \textit{Increase} means that the there will be a smooth ramp up until the coupling weight reaches its maximum (i.e. the coupling weight parameter). \textit{Bell} means that there is a smooth bell curve over the coupling weight, i.e. first in increases, then decreases. Here, it will start and end at 0, and peak at a maximum which is set via the coupling weight parameter. 

Figure~\ref{appendix:couplingAblationsQM9_target4} and Figure~\ref{appendix:couplingAblationsQM9_target7} shows the impact of different coupling strategies on the model performance, here for target 4 and 7 of QM9 respectively. We can see that a good choice of the coupling weight is crucial for our method to result in a significant improvement in MAE compared to the fully supervised baseline. The optimal coupling weight seems to differ per task, as both targets have a different optimum (0.1 for target 4 and 0.01 for target 7). A good value for the coupling start seems to depend on the choice of coupling weight, however a trend can be observed that for the best coupling weight options for each target, the optimal coupling start is 0, i.e. using coupling from the start of training. The optimal choice of coupling schedule seems to depend on both of the other choices, but in the specific case of target 4, the \textit{increase} schedule led to the best performance. For target 7, the \textit{bell} schedule resulted in the best ensemble performance, while the \textit{constant} schedule led to the best individual performance. 

One interesting finding here is that if we couple too strongly, meaning we are weighing the unsupervised loss to high, the ensemble performance gets worse than the baseline, while at the same time the individual members from the ensemble are outperforming the baseline. This is due to the models collapsing, so while each individual model is better than an individual model that was not coupled, ensembling has no significant benefit anymore.

\begin{figure}[H]
  \centering
  \includegraphics[width=0.45\columnwidth]{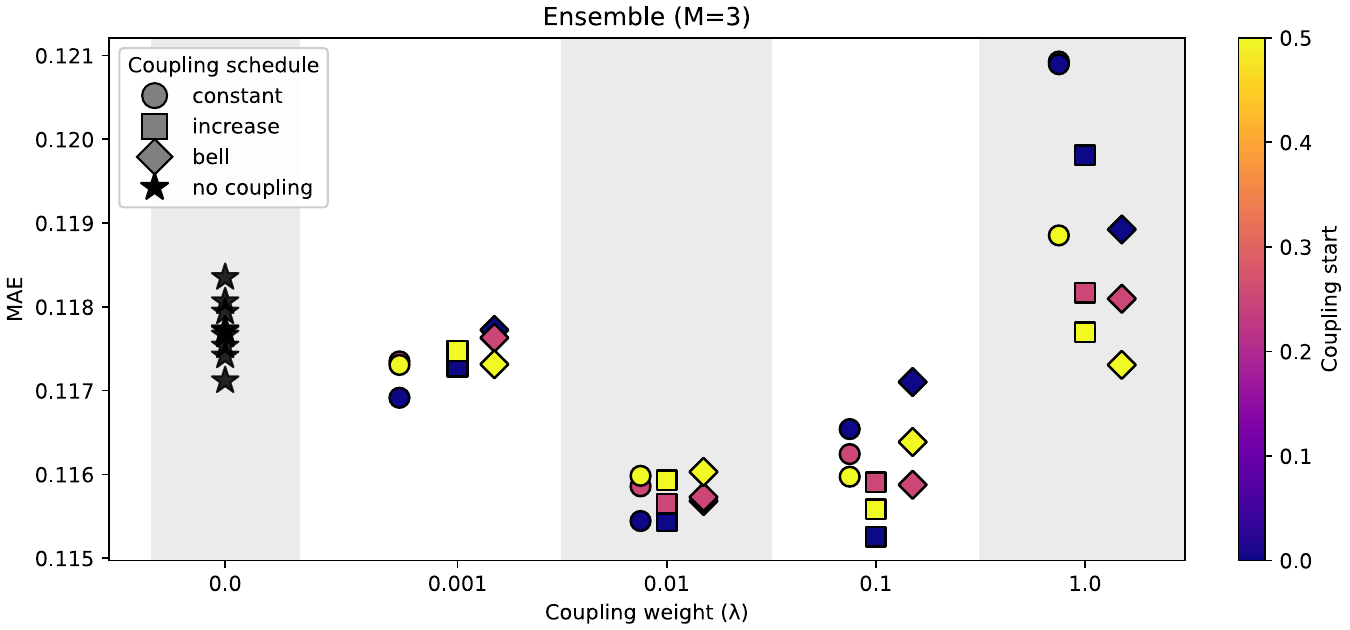}
  \includegraphics[width=0.45\columnwidth]{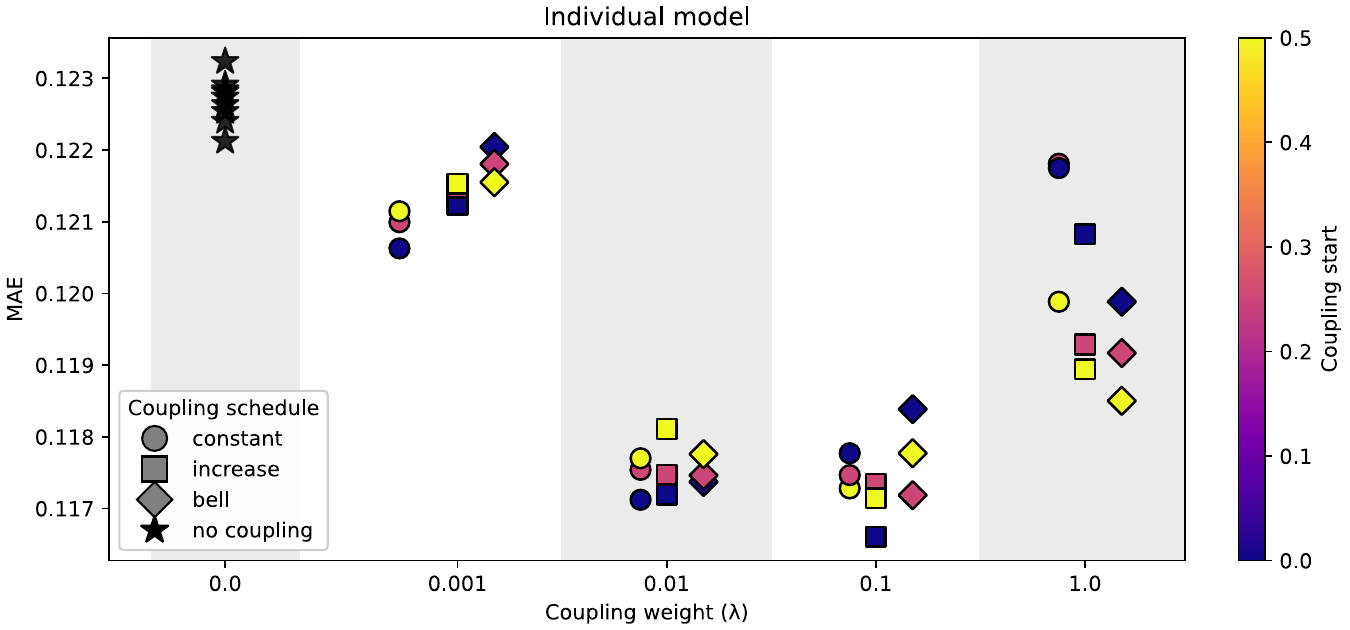}
  \caption{Performance (MAE) of coupled ensembles (left) and individual models from coupled ensembles (right) for different coupling strategies, for QM9 target 4.}\label{appendix:couplingAblationsQM9_target4}
\end{figure}

\begin{figure}[H]
  \centering
  \includegraphics[width=0.45\columnwidth]{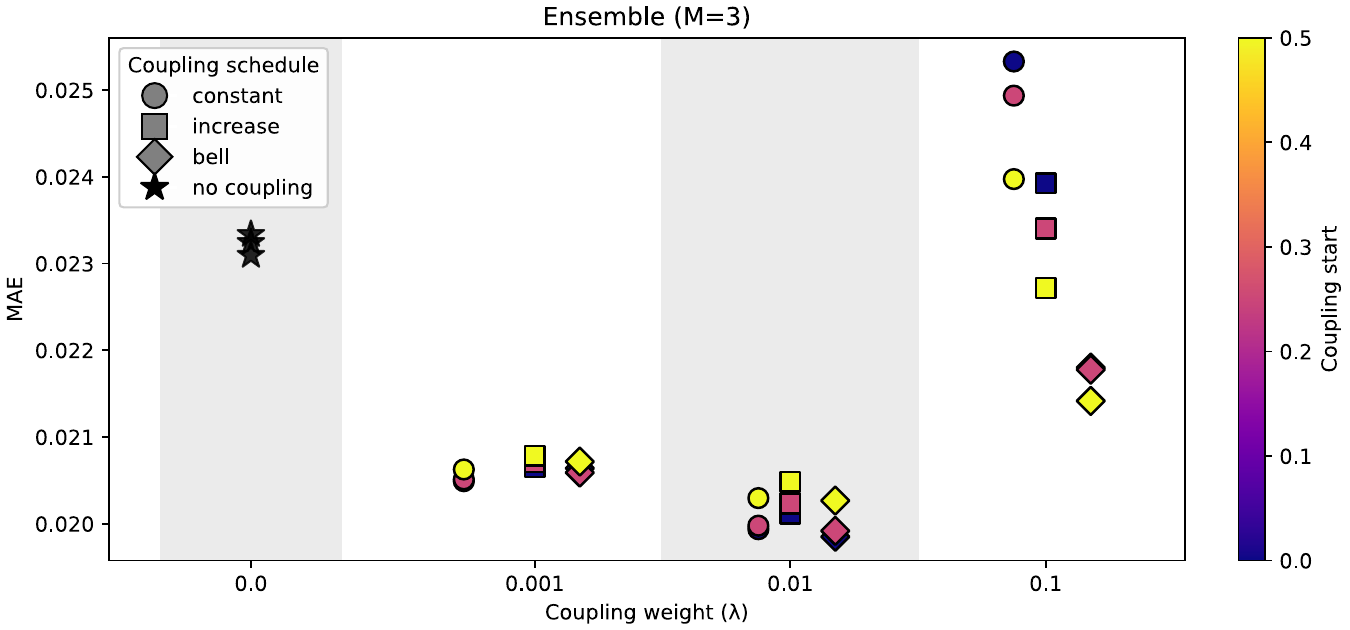}
  \includegraphics[width=0.45\columnwidth]{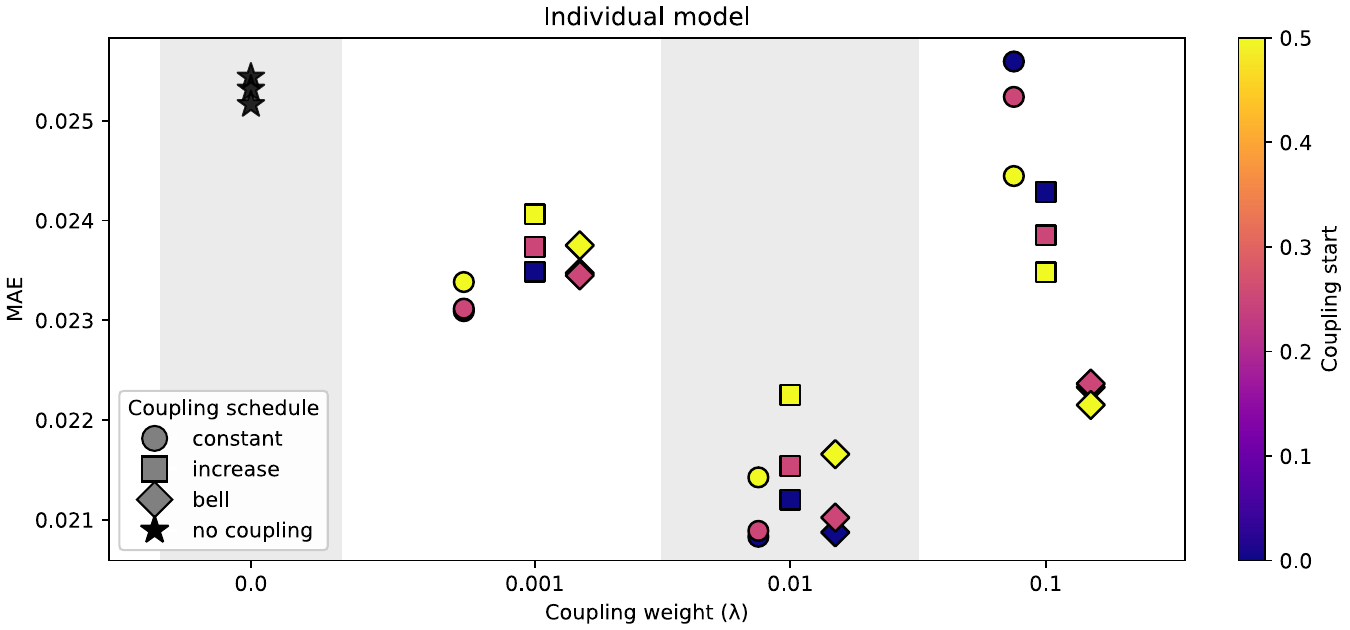}
  \caption{Performance (MAE) of coupled ensembles (left) and individual models from coupled ensembles (right) for different coupling strategies, for QM9 target 7.}\label{appendix:couplingAblationsQM9_target7}
\end{figure}

\subsection{Evaluating Overfitting on Unlabeled Data}
To evaluate potential overfitting to the unlabeled data, we compare the final model's performance on the unlabeled training set against its performance on the unseen test set. For this analysis, we leverage our access to the ground-truth labels of the unlabeled set to compute its MAE.
As presented in Table~\ref{tab:qm9unsupervisedperformance}, the performance is nearly identical across both datasets for all 12 QM9 targets. This strong correspondence indicates that our method avoids overfitting to the unlabeled data used during training. This has a significant practical benefit, as it means the model's predictions on the entire unlabeled set can be reliably used for downstream tasks.

\begin{table}[h!]
\centering
\caption{PaiNN performance (MAE) on QM9 targets, comparing the held-out test set with the unlabeled dataset used during training. Results are reported for 5 seeds.}
\label{tab:qm9unsupervisedperformance}
\begin{tabular}{l l l r r}
\toprule
Target & Unit & Data & \multicolumn{1}{c}{Individual Member} & \multicolumn{1}{c}{Ensemble (M=4)} \\
\midrule
\multirow{2}{*}{$\mu$} & \multirow{2}{*}{D} & Test & $.0619 \err{.0003}$ & $.0613 \err{.0003}$ \\
 & & Unlabeled & $.0596 \err{.0003}$ & $.0596 \err{.0003}$ \\
\midrule
\multirow{2}{*}{$\alpha$} & \multirow{2}{*}{$a_0^3$} & Test & $.1322 \err{.0011}$ & $.1303 \err{.0011}$ \\
 & & Unlabeled & $.1268 \err{.0008}$ & $.1261 \err{.0008}$ \\
\midrule
\multirow{2}{*}{$\epsilon_{\text{HOMO}}$} & \multirow{2}{*}{meV} & Test & $73.9789 \err{.4368}$ & $73.0755 \err{.4472}$ \\
 & & Unlabeled & $71.7113 \err{.4012}$ & $71.6826 \err{.4018}$ \\
\midrule
\multirow{2}{*}{$\epsilon_{\text{LUMO}}$} & \multirow{2}{*}{meV} & Test & $57.7186 \err{.2247}$ & $57.2369 \err{.2159}$ \\
 & & Unlabeled & $56.8810 \err{.1844}$ & $56.8676 \err{.1839}$ \\
\midrule
\multirow{2}{*}{$\Delta\epsilon$} & \multirow{2}{*}{meV} & Test & $117.0365 \err{.4988}$ & $115.7195 \err{.5100}$ \\
 & & Unlabeled & $114.1592 \err{.3078}$ & $114.1303 \err{.3091}$ \\
\midrule
\multirow{2}{*}{$\langle R^2 \rangle$} & \multirow{2}{*}{$a_0^2$} & Test & $.6100 \err{.0206}$ & $.5605 \err{.0206}$ \\
 & & Unlabeled & $.5918 \err{.0205}$ & $.5552 \err{.0202}$ \\
\midrule
\multirow{2}{*}{ZPVE} & \multirow{2}{*}{meV} & Test & $2.0138 \err{.0054}$ & $1.9907 \err{.0055}$ \\
 & & Unlabeled & $1.9925 \err{.0066}$ & $1.9883 \err{.0066}$ \\
\midrule
\multirow{2}{*}{$U_0$} & \multirow{2}{*}{meV} & Test & $19.9642 \err{.1291}$ & $19.3816 \err{.1278}$ \\
 & & Unlabeled & $19.3096 \err{.1434}$ & $18.9715 \err{.1416}$ \\
\midrule
\multirow{2}{*}{$U$} & \multirow{2}{*}{meV} & Test & $20.1731 \err{.1577}$ & $19.5886 \err{.1574}$ \\
 & & Unlabeled & $19.5288 \err{.1248}$ & $19.1908 \err{.1234}$ \\
\midrule
\multirow{2}{*}{$H$} & \multirow{2}{*}{meV} & Test & $20.1407 \err{.1268}$ & $19.5509 \err{.1328}$ \\
 & & Unlabeled & $19.5028 \err{.1370}$ & $19.1620 \err{.1355}$ \\
\midrule
\multirow{2}{*}{$G$} & \multirow{2}{*}{meV} & Test & $20.3142 \err{.1571}$ & $19.7479 \err{.1634}$ \\
 & & Unlabeled & $19.7490 \err{.1384}$ & $19.4296 \err{.1400}$ \\
\midrule
\multirow{2}{*}{$C_v$} & \multirow{2}{*}{$\frac{\text{cal}}{\text{mol K}}$} & Test & $.0449 \err{.0002}$ & $.0439 \err{.0002}$ \\
 & & Unlabeled & $.0443 \err{.0001}$ & $.0439 \err{.0001}$ \\
\bottomrule
\end{tabular}
\end{table}

\subsection{Decoupling of Ensemble Gradient} \label{appendix:detached}
Passing the gradient through the ensemble prediction could potentially lead to failure cases such as \textit{learner collusion}~\citep{voter_collusion}. We investigate this by comparing the error under detaching and not detaching the gradient. The results are shown in Table \ref{tab:detached} for the equivariant GNN on QM9 target $U_0$ and for the GNN architectures on peptides-struct in Table \ref{tab:detached_comparison}. We see there is no statistical significant difference between detaching and not detaching, so no major failure case is observed.
\begin{table}[H]
    \caption{The validation error on QM9 target $U_0$ with the ensemble prediction detached and attached. The error is reported as $95\%$ standard error of the mean using 3 different seeds for each run.}
    \label{tab:detached}
    \begin{center}
    \begin{small}
        \begin{tabular}{lcc}
            \toprule
             & Individual & Ensemble \\
            \midrule
            Detached & $0.0197 \err{0.0007}$& $0.0191 \err{0.0006}$\\
            Not Detached& $0.0196 \err{0.0005}$& $0.0190 \err{0.0004}$\\
            \bottomrule
        \end{tabular}
    \end{small}
    \end{center}
\end{table}

\begin{table}[H]
    \caption{The test error (MAE) on peptides-struct. The error is reported as $95\%$ standard error of the mean using 5 different seeds for each run.}
    \label{tab:detached_comparison}
    \begin{center}
    \begin{small}
        \begin{tabular}{lcccc}
            \toprule
             & \multicolumn{2}{c}{Individual} & \multicolumn{2}{c}{Ensemble} \\
            \cmidrule(lr){2-3} \cmidrule(lr){4-5}
            Architecture& Not Detached & Detached & Not Detached & Detached \\
            \midrule
            GCN      & $0.2870 \err{0.0037}$ & $0.2868 \err{0.0062}$ & $0.2867 \err{0.0038}$ & $0.2866 \err{0.0061}$\\
            GIN     & $0.2949 \err{0.0045}$ & $0.2944 \err{0.0072}$ & $0.2943 \err{0.0044}$ & $0.2938 \err{0.0068}$\\
            GatedGCN & $0.2853 \err{0.0050}$ & $0.2854 \err{0.0061}$& $0.2846 \err{0.0050}$ & $0.2848 \err{0.0068}$ \\
            \bottomrule
        \end{tabular}
    \end{small}
    \end{center}
\end{table}

\section{Computational Scaling} \label{app:computational_scaling}
Training larger ensembles and coupling the ensemble increases the computational cost. We document the computational scaling for PaiNN in Table \ref{tab:compute_scaling}. 
The time pr. epoch was averaged over 10 epochs, and tested with a batch size of 32, on a system with 64GB memory, RTX A5000, and 4 threads of EPYC 9124.

\begin{table}[H]
\centering
\caption{PaiNN training performance (Seconds / Epoch) for different model sizes across supervised and SSL variants.}
\label{tab:compute_scaling}
\begin{tabular}{c c c c}
\toprule
& \multicolumn{2}{c}{Training (Seconds / Epoch)} & Inference (ms / Batch)\\
\cmidrule(l){2-3} 
\makecell{Ensemble Size (M)}& \multicolumn{1}{c}{Decoupled} & Coupled & \\
\midrule
1 & 4.69  & - &15.60\\
2 & 6.00  & 11.5   &28.21\\
3 & 7.82  & 15.42  &41.03\\
4 & 10.14 & 19.63  &54.29\\
6 & 13.95 & -  &80.83\\
8 & 17.84 & -  &107.13\\
\bottomrule
\end{tabular}
\end{table}

\section{Adaptive Ensemble Coupling}
Early during the experimentation, we also investigated methods for adapting the coupling strength automatically.
Specifically, we tried using the SoftAdapt~\citep{softadapt} on QM9 target ZPVE. The results can be seen in Figure \ref{fig:softadapt}. We observed SoftAdapt consistently found a too high coupling weight (around 0.5-2, compared to the optimal 0.01). Another potential direction could have been to use the variance within the ensemble as signal for adapting the coupling weight. The idea being if the coupling weight was too high there would be a visible phase transition in the intra-ensemble variance that could be controlled. However, we instead observed the variance decreasing smoothly (see Figure \ref{appendix:ensemble_variance_epoch}). This led us not to pursue this further.

\begin{figure}[H]
  \centering
  \includegraphics[width=0.45\columnwidth]{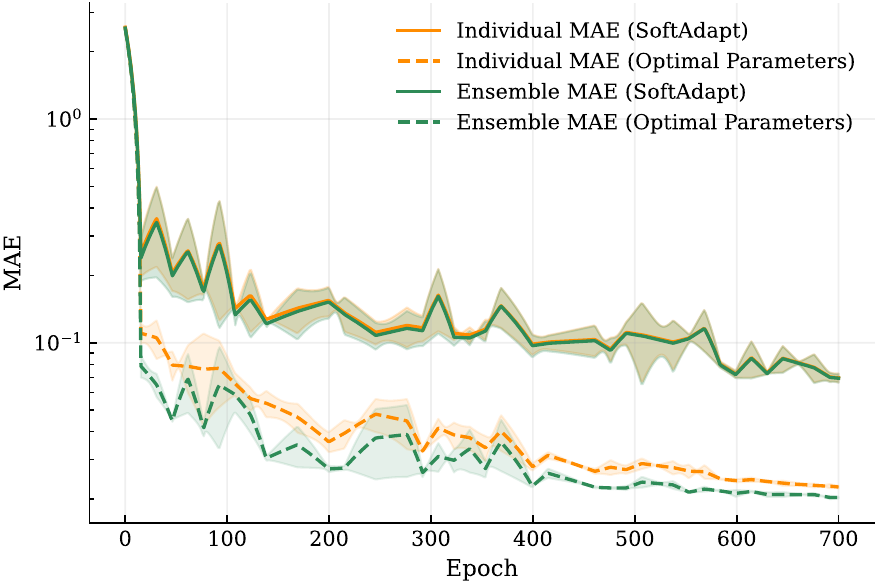}
  \includegraphics[width=0.45\columnwidth]{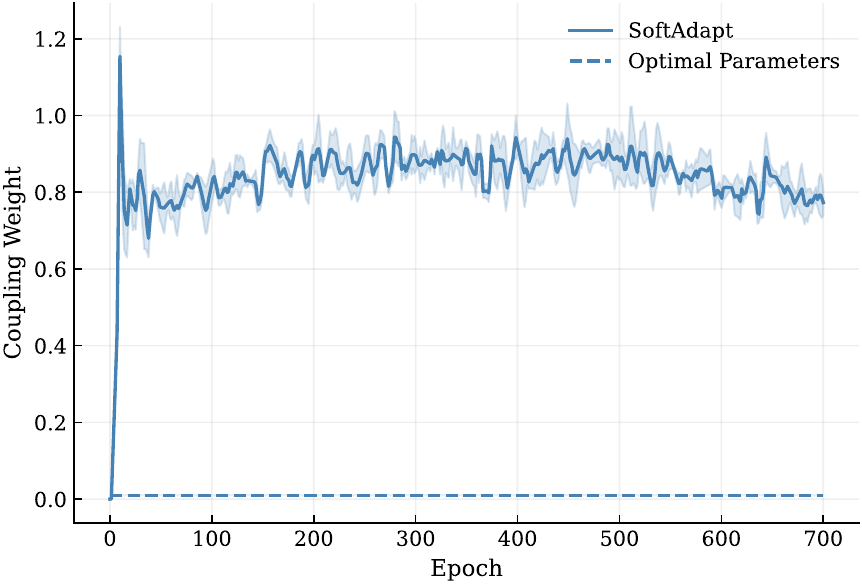}
  \caption{The validation error (left) and coupling weights (right) for experiments with running SoftAdapt to control the coupling strength compared against the optimal coupling strength of 0.01. Note that we only train for 700 epochs, as this parameter was not tuned at the time of these runs. The curves are averaged over 3 seeds with $95\%$ of the standard error of the mean reported}\label{fig:softadapt}
\end{figure}


\clearpage
\newpage

\end{document}